\pgfplotsset{compat=1.10}
\newcommand{\policyratio}{\gamma}
\newcommand{\Ehat}{\widehat\E}
\newcommand{\rhohat}{\widehat\rho}
\newcommand{\pk}{\text{\tiny Pass@K}}
\newcommand{\greinforce}{\widehat G_{\text{REINFORCE}}}
\newcommand{\reinforce}{\text{REINFORCE}}
\newcommand{\rloo}{\text{RLOO}}
\newcommand{\loo}{\text{loo}}
\newcommand{\grpo}{\text{GRPO}}
\newcommand{\grpot}
\newcommand{\rhoktheta}{\rho_{K,\theta}}
\newcommand{\rhokmtheta}{\rho_{K-1,\theta}}
\newcommand{\rhohatktheta}{\widehat\rho_{K,\theta}}
\newcommand{\rhohatk}{\widehat\rho_{K}}
\newcommand{\rhohatkm}{\widehat\rho_{K-1}}
\newcommand{\Ghat}{\widehat G}
\newcommand{\At}{\widetilde A}
\newcommand{\nablap}{\widehat\nabla_+}
\newcommand{\nablam}{\widehat\nabla_-}
\newcommand{\bi}[3]{\Bc\big({#1};{#2},{#3}\big)}
\newcommand{\omkm}{\widehat f_{K-1}^{ -}}
\newcommand{\omkp}{\widehat f_{K-1}^{ +}}
\newcommand{\omkpm}{\widehat f_{K-1}^{ \pm}}
\newcommand{\omt}{\widetilde\omega_K}
\newcommand{\picond}{\pi_\theta(\cdot|x)}
\newcommand{\picondy}[1]{\pi_\theta(#1|x)}
\newcommand{\old}{{\mathrm{old}}}
\newcommand{\Dc}{\mathcal{D}}
\newcommand{\clip}[3]{\left[{#1}\right]_{#2}^{#3}}
\newcommand{\E}{\mathbb{E}}
\newcommand{\nn}{\nonumber}
\newcommand{\combo}{\text{mix}}
\newcommand{\R}{\mathbb{R}}
\newcommand{\ent}{\text{entropy}}
\newcommand{\skewr}{\text{skew-R}}
\newcommand{\tsn}[1]{{\left\vert\kern-0.25ex\left\vert\kern-0.25ex\left\vert #1 
    \right\vert\kern-0.25ex\right\vert\kern-0.25ex\right\vert}}
\definecolor{darkred}{RGB}{150,0,0}
\definecolor{darkgreen}{RGB}{0,150,0}
\definecolor{darkblue}{RGB}{0,0,200}
\newtheorem{claim}{Claim}
\newtheorem{lemma}{Lemma}
\newtheorem{corollary}{Corollary}
\newcommand{\simiid}{\stackrel{\text{IID}}{\sim}}
\newcommand{\cut}[1]{\textcolor{red}{}}
\newcommand{\Bc}{{\mathcal{B}}}
\newcommand{\Pc}{\mathcal{P}}
\newcommand{\beq}{\begin{equation}}
\newcommand{\eeq}{\end{equation}}
\newcommand{\bea}{\begin{align}}
\newcommand{\eea}{\end{align}}
  \newcommand{\la}{{\lambda}}                     % lambda
\DeclarePairedDelimiterX{\inp}[2]{\langle}{\rangle}{#1, #2}
\begin{document}

\title{\textbf{Advantage Shaping as Surrogate Reward Maximization:} \\ \textbf{Unifying Pass@K Policy Gradients}}

\author{Christos Thrampoulidis\qquad Sadegh Mahdavi\qquad Wenlong Deng\\
\vspace{-2mm}\\
Department of Electrical and Computer Engineering
\\
 University of British Columbia \quad 
}

\date{}
\maketitle

\vspace{-0.1in}

%%%%%%%%%%%%%%%%%%%%%%%%%%%%%%%%%%%%%%%%%%%%%%
\begin{abstract}

We reconcile two seemingly distinct approaches to policy gradient optimization for the Pass@K objective in reinforcement learning with verifiable rewards: (1) direct REINFORCE-style methods, and (2) advantage-shaping techniques that directly modify GRPO. We show that these are two sides of the same coin. By reverse-engineering existing advantage-shaping algorithms, we reveal that they implicitly optimize surrogate rewards.  We specifically interpret practical ``hard-example upweighting'' modifications to GRPO as reward-level regularization. Conversely, starting from surrogate reward objectives, we provide a simple recipe for deriving both existing and new advantage-shaping methods.  This perspective provides a  lens for RLVR policy gradient optimization beyond our original motivation of Pass@K.
\end{abstract}

%%%%%%%%%%%%%%%%%%%%%%%%%%%%%%%%%%%%%%%%%%%%%%%%%%%%%%%%%%%%%%%%%%%%%%%%%%%%%%%%%%%%%%%%%%

\begin{figure}[h] 
\vspace{-0.05in}
    \centering
    \includegraphics[width=0.7\textwidth]{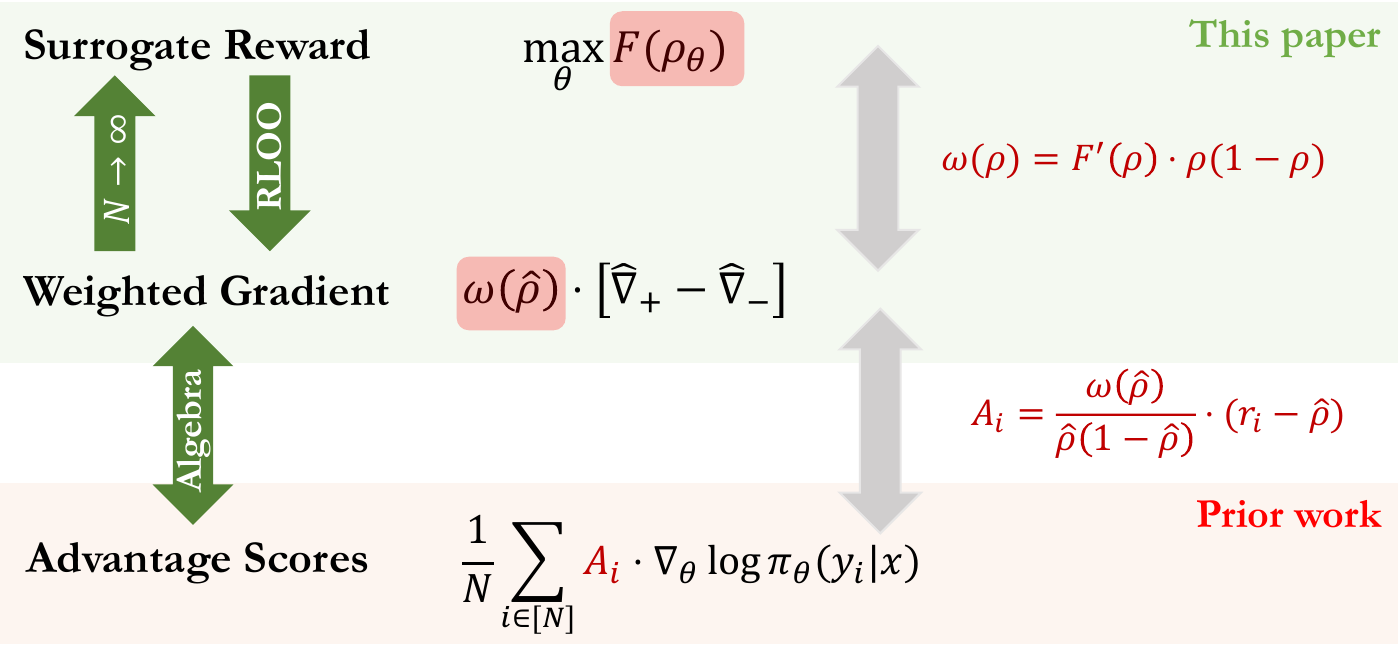}
    \captionsetup{width=\textwidth}
    % \vspace{-0.3in}
\caption
{Illustration of our bidirectional framework mapping Advantage Scores ($A_i$) of RLVR algorithms with  0/1 and Pass@K rewards---such as GRPO \cite{GRPO_original}, Pass@K-GRPO \cite{passk_bydance,sadegh}, and Prioritized Sampling \cite{team2025kimi}---to Surrogate Rewards ($F(\rho)$) by reformulating the  gradient updates as Gradient Reweighting ($\omega(\rhohat)$) of  the  average gradient updates of correct ($\nablap$) and wrong ($\nablam$) model responses. The framework provides a recipe for engineering new RLVR  algorithms by choosing a surrogate reward objective (or equivalently, a reweighting function), rather than ``shaping'' the advantage scores  as in prior work.}
\label{fig:intro}
 \vspace{-0.1in}
\end{figure}

\vspace{-0.2in}
\section{Introduction}
\vspace{-0.1in}
When evaluating large language models on math and coding tasks, standard practice is to generate $K$ independent solutions and check whether any succeeds; the so-called Pass@K metric \citep{kulal2019spoc,codex}. Yet most policy gradient methods, from REINFORCE \citep{williams1992simple} to more recent variants like RLOO \citep{kool2019buy,ahmadian2024back} and GRPO \citep{GRPO_original}, optimize for single-attempt reward creating a mismatch. Should we  instead train what we test?

Recent work has approached this question from two seemingly distinct angles. \citet{sadegh,passk_unbiased_gradient} derive policy gradients that directly maximize Pass@K reward, yielding REINFORCE-style updates with reweighted advantages. Meanwhile, \citet{passk_bydance} propose modifying GRPO through \emph{advantage shaping}, that is, directly adjusting advantage scores to account for the Pass@K objective. Both improve Pass@K performance, but their relationship remains unclear. 
\emph{Are these fundamentally different techniques, or two views of the same underlying principle?}

This work unifies these seemingly distinct perspectives, showing that advantage shaping and direct optimization are two sides of the same coin. The key is recognizing that different policy gradient methods can be understood as optimizing different \emph{surrogate rewards}; see Figure \ref{fig:intro} and Table \ref{tab:algo_summary_adv}. More broadly, we establish surrogate rewards as a design mechanism for policy gradient optimization with verifiable rewards, providing a recipe for algorithm interpretation via reverse-engineering and development via forward-engineering.

\subsection{Problem Setup: Reinforcement Learning with Verifiable Rewards}

We study reinforcement learning with verifiable rewards (RLVR) 
\citep{RLVR_term,GRPO_original,r1}, a framework for training Large Language Models (LLMs) 
on tasks with objectively verifiable outcomes (see \citep{RLLLMsurvey2025} 
for a recent survey).
%We study reinforcement learning with verifiable rewards (RLVR) \citep{RLVR_term,GRPO_original,r1}; see \citep{RLLLMsurvey2025} for a recent survey.
 Concretely, we are given a distribution $\Pc$ over problem-answer pairs $(x,a)$, where $x$ might be a math problem and $a$ its final numeric answer. The goal is to train an LLM to  maximize the expected reward of finding correct answers to problems from $\Pc$:
 \begin{align}\label{eq:vanilla objective}
   \max\nolimits_\theta\;\;\,\E_{{(x,a)\sim \Pc}} \big[ \E_{y\sim \pi_\theta(\cdot|x) }\,r( y , a ) \big]\,.
\end{align}
We abstract the LLM as a conditional distribution $\pi_\theta(\text{response}|\text{prompt})$ parameterized by $\theta$: Given prompt $x$, the model generates response $y$ (a sequence of tokens) by autoregressive sampling. The reward function $r(y,a)$ assesses whether the model response $y$ agrees with the reference answer $a$. If it does, the response is \emph{correct}; otherwise it is \emph{wrong}. {RLVR assumes the reward can be verified externally.}  Note the nested expectation: first over problem-answer pairs $(x,a) \sim \Pc$, then over model responses $y \sim \pi_\theta(\cdot|x)$ per problem.

\subsubsection{The 0/1 Reward}
The most popular reward function in RLVR is the binary 0/1 reward:
\[
r_{0/1}(y,a) :=   \begin{cases} 1 & \text{if $y$ agrees with $a$} \\ 0 & \text{otherwise} \end{cases}\,.
\]
Model performance is measured by the expected 0/1 reward  over all examples $(x,a)$ in the distribution: $\E_{(x,a)\sim \Pc}[\rho_\theta(x,a)]$, where the per-example expected 0/1 reward (equal to the probability that model $\theta$ generates a correct response for problem $x$) is:
\[
\rho_\theta(x,a) := \E_{y\sim \pi_\theta(\cdot|x)}\,r_{0/1}( y,a )\,.
\]

A popular algorithm for maximizing expected 0/1 reward is REINFORCE Leave-One-Out (RLOO) \citep{kool2019buy,ahmadian2024back}, which extends the naive REINFORCE algorithm \citep{williams1992simple} using multiple generated responses to form a baseline. At each iteration $t$, and for each pair $(x,a)$ from a training set $\Dc$, RLOO updates the model parameters as
\[
\theta_{t+1}=\theta_t+\eta\cdot \Ghat_\rloo(\theta_t;(x,a))\,,
\]
where, $\eta$ is the learning rate, and the gradient estimate is a weighted average over $N$ generated responses $y_i\simiid\picond$:
\begin{align}\label{eq:RLOO main}
\Ghat_\rloo(\theta;(x,a)) = \frac{1}{N}\sum\nolimits_{i\in[N]}A^\rloo_i \cdot  \nabla_\theta \log\pi_\theta(y_i|x)\,.\tag{RLOO}
\end{align}
The weights $A^\rloo_i$ are called \emph{advantage scores}. The construction of RLOO's advantages scores ensures $\Ghat_\rloo$ is an unbiased, low-variance estimate of $\nabla_\theta \rho_\theta(x,a)$, the gradient of the population (expected) 0/1 reward. Intuitively, for large $N$:
\begin{align}\label{eq:rloo pop proxy}
\Ghat_\rloo(\theta;(x,a))  \approx \nabla_\theta \rho_\theta(x,a)\,.
\end{align}
Despite its simplicity, RLOO is a strong baseline \citep{ahmadian2024back}. Its structure also underlies recent popular variants like Group Relative Policy Optimization (GRPO) \citep{GRPO_original}. In full online mode (when clipping can be ignored), GRPO uses the same form as \eqref{eq:RLOO main} with normalized advantages $A_i^\grpo=A_i^\rloo/\sqrt{\hat{\rho} (1-\hat{\rho})}$, where $\hat{\rho}$ is the \emph{empirical} 0/1 reward for example  $(x,a)$.

\subsubsection{The Pass@K Reward}
The 0/1 reward, however, misaligns with practical LLM usage. Given a problem $x$, users typically prompt the model multiple times (say, $K$ times) and examine the responses $y_1,\ldots, y_K$ to find one that produces the desired answer. Accordingly, LLMs are often evaluated using the Pass@K metric \citep{codex}, which measures whether at least one solution is correct among $K$ generated responses:
\begin{align}
r_\pk(\{y_i\}_{i\in[K]},a) 
&= 
\begin{cases} 1 & \text{if at least one $y_i$ agrees with $a$} \\ 0 & \text{otherwise} \end{cases}\,.\nn
\end{align}
Since test-time performance is measured via Pass@K, we may train the model to directly maximize Pass@K reward rather than 0/1 reward by maximizing 
$\E_{(x,a)\sim \Pc}[\rhoktheta(x,a)]\,,$ 
where we define the per-example expected Pass@K reward as
\[
\rhoktheta(x,a) := \E_{\{y_i\}_{i\in[K]}\sim \pi_\theta(\cdot|x)}\,r_\pk( \{y_i\}_{i\in[K]} ,a )\,.
\]
This equals the probability that model $\theta$ generates at least one correct response in $K$ samples.

\subsection{Contributions}

% \vp

\indent\textbf{C-1. Direct Pass@K optimization.} Building on \citet{sadegh}, we show that the exact Pass@K policy gradient equals the 0/1 gradient reweighted by per-example Fail@(K-1) probabilities. This yields Monte Carlo unbiased estimators REINFORCE$_K$/RLOO$_K$ and a biased variant GRPO$_K$, all expressible as \emph{asymmetric} reweightings of their 0/1-reward counterparts.

\vspace{0.1in}
\noindent\textbf{C-2. Advantage shaping $=$ surrogate reward maximization.} We reverse-engineer the advantage-shaping algorithm of \citet{passk_bydance} and find that it asymptotically (in the number of generated responses) maximizes a surrogate reward $\frac{2}{K}\arcsin(\sqrt{\rhoktheta})$. 

Conversely, we provide a forward-engineering recipe: starting from any differentiable surrogate $F:[0,1]\rightarrow\R$ and surrogate-reward maximization objective
\begin{align}\label{eq:gpo}
\max\nolimits_{\theta}\;\; \Ehat_{(x,a)\sim\Dc}\left[\, F\left(\E_{y\sim \pi_\theta(\cdot|x) }\,r( y , a )\right) \,\right]\,,
\end{align}
advantage-shaping heuristics can be systematically derived as follows:
%Their method can be reproduced by three steps:
\begin{enumerate}[label=(\roman*),topsep=0pt, itemsep=0pt, parsep=0pt, partopsep=0pt]
\item  Differentiate the surrogate reward: $\nabla_\theta F(\rho_\theta)= F'(\rho_\theta)\cdot \nabla_\theta \rho_\theta$ 
\item Substitute all population quantities with their empirical analogues: 
\begin{enumerate}[topsep=-0.1pt, itemsep=0pt, parsep=0pt, partopsep=0pt]
    \item Replace $F'(\rho_\theta)$ with $F'(\rhohat)$
    \item Replace the population reward gradient $\nabla_\theta\rho_\theta$ with its RLOO proxy $\Ghat_\rloo(\theta)$ (Eq.~\eqref{eq:rloo pop proxy}). 
\end{enumerate}
\end{enumerate}
{Put together, this yields a per-example empirical gradient
$\widehat G_F(\theta;(x,a))=\frac{1}{N}\sum_{i\in[N]} A_i^{F}\cdot\nabla_i$
with advantage shaped as $A_i^{F}=F'(\rhohat)\cdot\sqrt{\rhohat(1-\rhohat)}\cdot A_i^{\grpo}$. Thus, a practical implementation can reuse existing GRPO code by multiplying the GRPO advantages (before clipping) by this prefactor.}

As a corollary, when $K{=}1$, we show  GRPO is  (up to clipping) effectively RLOO applied to the surrogate reward $F(\rho_\theta)=2\arcsin(\sqrt{\rho_\theta})$, a variance-stabilizing transformation of the 0/1 reward.

\vspace{0.1in}
\noindent\textbf{C-3. Reward-level regularization.} 
We interpret commonly used heuristics that downweight ``easy'' examples (those already solved with high probability) and upweight ``hard'' ones \citep{passk_bydance,team2025kimi}. Concretely,  we show that multiplying the GRPO gradients by $1-\rhohat$ is effectively equivalent to optimizing a regularized surrogate reward $\arcsin(\sqrt{\rho_\theta})+\sqrt{\rho_\theta(1-\rho_\theta)}$. More generally, this can be interpreted 
as a special instance of regularized surrogate reward maximization:
\begin{align}\label{eq:regpo}
\max\nolimits_{\theta}\;\; \Ehat_{(x,a)\sim\Dc}\left[\, F\left(\E_{y\sim \pi_\theta(\cdot|x) }\,r( y , a )\right) + \lambda \cdot \Omega\left(\E_{y\sim \pi_\theta(\cdot|x) }\,r( y , a )\right)\,\right]\,,
\end{align}
where the stochastic training objective over examples $(x,a)\sim\Dc$ consists of two components: a data-fitting term $F$ (e.g., the GRPO $\arcsin(\sqrt{\rho_\theta})$ surrogate) and an additive regularizer $\Omega$ that encourages maintaining some probability mass on wrong responses, thereby exploring alternative solution paths that may generalize beyond the training set.  Unlike typical policy-level or parameter-level regularization, this  form of regularization operates simply at the reward level. To demonstrate its use, we instantiate the recipe of Contribution~C-2 with entropy regularization, yielding a new tunable advantage-shaping algorithm.

% --- FULL intro table (4 columns, shaded group headers) ---
\begin{table}[h]
\centering
\captionsetup{width=\textwidth}
\caption{Unified view of main algorithms studied in this work. Rows  grouped by target evaluation metric. $\rho,\rho_K$ are per-example expected 0/1 and Pass@K rewards; 
$\hat{\rho},\hat{\rho}_K$ are their empirical estimators.} 
%$\widehat{\nabla}_+,\widehat{\nabla}_-$ are averages over correct/wrong responses; }
\label{tab:algo_summary_adv}
\resizebox{\textwidth}{!}{%
\begin{tabular}{@{}llll@{}}
\toprule
\textbf{Algorithm} & \textbf{Advantage Scores $(A^+,A^-)$} & \textbf{Weighted Empirical Gradient} & \textbf{Population Surrogate Reward} \\
\midrule

\rowcolor{black!6}\multicolumn{4}{@{}l}{\textit{Targets 0/1 evaluation (K=1)}}\\

RLOO \citep{kool2019buy} &
$\displaystyle\Big(\tfrac{N(1-\hat{\rho})}{N-1},\; -\tfrac{N\hat{\rho}}{N-1}\Big)$ &
$\hat{\rho}(1-\hat{\rho})\,[\widehat{\nabla}_+ - \widehat{\nabla}_-]$ &
$\rho$ \\

GRPO \citep{GRPO_original} &
$\displaystyle\Big(\sqrt{\tfrac{1-\hat{\rho}}{\hat{\rho}}},\; -\sqrt{\tfrac{\hat{\rho}}{1-\hat{\rho}}}\Big)$ &
$\sqrt{\hat{\rho}(1-\hat{\rho})}\,[\widehat{\nabla}_+ - \widehat{\nabla}_-]$ &
$2\,\arcsin\!\big(\sqrt{\rho}\big)$ \\

Skew\text{-}R [This work]\tablefootnote{Effectively equivalent to Kimi~1.5 Prioritized Sampling \citep{team2025kimi}; see Sec.~\ref{sec:skew-R}.} &
$\displaystyle\Big((1-\hat{\rho})\sqrt{\tfrac{1-\hat{\rho}}{\hat{\rho}}},\; -(1-\hat{\rho})\sqrt{\tfrac{\hat{\rho}}{1-\hat{\rho}}}\Big)$ &
$(1-\hat{\rho})\,\sqrt{\hat{\rho}(1-\hat{\rho})}\,[\widehat{\nabla}_+ - \widehat{\nabla}_-]$ &
$\arcsin\!\big(\sqrt{\rho}\big)+\sqrt{\rho(1-\rho)}$ \\

\rowcolor{black!6}\multicolumn{4}{@{}l}{\textit{Targets Pass@K evaluation (K$\ge$2)}}\\

RLOO$_K$ [This work] &
$\displaystyle\Big(\omkp\,\tfrac{N(1-\hat{\rho})}{N-1},\; -\omkm\,\tfrac{N\hat{\rho}}{N-1}\Big)$\tablefootnote{$\omkp,\omkm$ are leave-one-out Fail@(K$-$1) scalers (Eq.~\eqref{eq:om}). \citep{sadegh} implements a biased version; App. \ref{sec:biased_app}.} &
$\omkp\,\hat{\rho}(1-\hat{\rho})\,\widehat{\nabla}_+ \;-\; \omkm\,\hat{\rho}(1-\hat{\rho})\,\widehat{\nabla}_-$ &
$\rho_K$ \\

$\grpot$ \citep{passk_bydance} &
$\displaystyle\Big(\omt\,\sqrt{\tfrac{1-\hat{\rho}}{\hat{\rho}}},\; -\omt\,\sqrt{\tfrac{\hat{\rho}}{1-\hat{\rho}}}\Big)$\tablefootnote{The scaler $\omt$ is defined in Eq. \eqref{eq:wenlong vs vanilla}.} &
$\sqrt{\tfrac{1-\hat{\rho}_K}{\hat{\rho}_K}}\;\hat{\rho}\,[\widehat{\nabla}_+ - \widehat{\nabla}_-]$ &
$\tfrac{2}{K}\,\arcsin\!\big(\sqrt{\rho_K}\big)$ \\

GRPO$_K$ [This work] &
$\displaystyle\Big(\omkp\,\sqrt{\tfrac{1-\hat{\rho}}{\hat{\rho}}},\; -\omkm\,\sqrt{\tfrac{\hat{\rho}}{1-\hat{\rho}}}\Big)$ &
$\omkp\,\sqrt{\hat{\rho}(1-\hat{\rho})}\,\widehat{\nabla}_+ \;-\; \omkm\,\sqrt{\hat{\rho}(1-\hat{\rho})}\,\widehat{\nabla}_-$ &
$\bi{\,1-(1-\rho_K)^{1/K}}{\tfrac{1}{2}}{K-\tfrac{1}{2}}$ \tablefootnote{Incomplete beta function $\bi{x}{a}{b}=\int_0^x u^{a-1}(1-u)^{b-1}\,\mathrm{d}u$.} \\

\bottomrule
\end{tabular}%
}

\end{table}

\vspace{4pt}
Table \ref{tab:algo_summary_adv} unifies the main Pass@K policy-gradient algorithms studied in this work. 
Starting from their advantage scores, we express every method (middle column, ignoring normalization constants and clipping)  as 
$w_+\,\widehat{\nabla}_+ - w_-\,\widehat{\nabla}_-$, where effective gradient weights $w_\pm$
multiply the average empirical gradients over correct/wrong responses. 
Leveraging this unified representation, we reverse-engineer the corresponding population surrogate reward that each algorithm implicitly optimizes (last column). 
Conversely, using the forward-engineering recipe from Contribution C-2, we can map a chosen surrogate reward back to advantage scores,
establishing a bidirectional relationship (see also Figure \ref{fig:intro}):
\begin{center}
\colorbox{black!5}{\parbox{0.75\textwidth}{\centering\vspace{2pt}
\emph{advantage scores} $\;\longleftrightarrow\;$
 \emph{weighted gradients} $\;\longleftrightarrow\;$
 \emph{surrogate reward}\vspace{2pt}
}}
\end{center}
{
\paragraph{Roadmap.} The rest of the paper is organized as follows. 
We start in Sec.~\ref{sec:0/1} by reviewing the 0/1-reward setting and fixing the unified gradient/advantage notation that we use throughout. 
Sec.~\ref{sec:Pass@K} then develops \textbf{C-1}: we directly differentiate the Pass@K objective  leading to the REINFORCE$_K$/RLOO$_K$ estimators and the GRPO$_K$ variant. 
Sec.~\ref{sec:bydance} presents the complementary \emph{advantage-shaping} route of $\grpot$ by \citet{passk_bydance} and contrasts its symmetric reweighting with the asymmetric reweighting arising from direct Pass@K optimization. 
Sec.~\ref{sec:uni} establishes \textbf{C-2} by bridging these views: we reverse-engineer $\grpot$ as (asymptotically) optimizing an arcsin-transformed surrogate Pass@K reward (Sec.~\ref{sec:reverse passk}), recover it by forward-engineering from this surrogate (Sec.~\ref{sec:forward}), and then generalize the reverse/forward correspondence to a broad class of binary-reward RLVR gradients (Sec.~\ref{sec:equiv}).
Finally, Sec.~\ref{sec:reg} develops \textbf{C-3} by interpreting practical hard-example upweighting as \emph{reward-level regularization} and instantiating the forward-engineering recipe with concrete regularizers. 
Sec.~\ref{sec:practical} collects practical considerations and limitations of the surrogate-reward viewpoint. Sec. \ref{sec:related_work} reviews concurrent work and connects our surrogate-reward formulation of RLVR to conditional stochastic optimization in the optimization literature. 
We conclude in Sec.~\ref{sec:conc} with an outlook on our perspective and its prospects. 
The appendix contains deferred derivations and proofs.
}

\paragraph{Notation.} {We define a training example as a tuple $(x,a)$, where $x$ is the prompt and $a$ is the reference answer.} 
For {this} example $(x,a)$, let $y_1,\ldots,y_N$ denote $N$ responses generated IID from model $\pi_\theta(\cdot|x)$. 
Let $r_i := r_{0/1}(y_i,a)$ denote the 0/1 reward of response $i$, and $\rhohat := \frac{1}{N}\sum_{i\in[N]}r_i$ denote the empirical 0/1 reward. 
{We denote the per-example expected 0/1 reward as $\rho_\theta(x,a) := \E_{y\sim\picond}[r_{0/1}(y,a)]$.} When clear from context, we simply write $\rho$ to denote $\rho_\theta(x,a)$.
Let $N^+ := \sum_{i\in[N]} r_i$ and $N^- := N - N^+$ denote the number of correct and wrong responses. 
We use the shorthand $\nabla_i:=\nabla_\theta\log\pi_\theta(y_i|x)$ for the log-probability gradient of individual response $y_i$, and for the average log-probability gradient over correct/wrong responses:
\begin{align}\label{eq:gd short}
    \nablap := \frac{1}{N^+}\sum_{i: r_i=1}\nabla_\theta\log\pi_\theta(y_i|x)\,\quad\text{and}\quad\nablam := \frac{1}{N^-}\sum_{i: r_i=0}\nabla_\theta\log\pi_\theta(y_i|x)\,.
\end{align}
{Throughout, $A_i$ denotes the advantage score of the $i$-th response to problem $x$ and we silence its dependence on $x$ {when it} will be clear from context. Finally, we reserve $K$ to denote the argument of Pass@K and quantities with subscript $K$ denote quantities related to Pass@K reward. In particular, $\rho_{K,\theta}$ denotes the per-example expected Pass@K reward and $\hat\rho_K$ its empirical counterpart.  Notation specific to different algorithms we study in this paper appear as they are introduced in the text. 
%For a summary table of the main notation used in this work see the appendix.
}

\section{Warm-up: Optimizing the 0/1 Reward}\label{sec:0/1}

With access to a finite training set $\Dc$ of problem-answer pairs $(x,a)$, which we call examples, 
%sampled IID from $\Pc$, 
a natural approach to maximize the average (over $\Dc$) reward is stochastic gradient ascent. At each iteration $t$, we update $\theta_{t+1} = \theta_t + \eta \cdot G(\theta_t;\Dc)$ with learning rate $\eta$ and gradient 
\begin{align}
G(\theta;\Dc) &= \Ehat_{(x,a)\sim \Dc}[\nabla_{\theta}\rho_{\theta}(x,a)]= \Ehat_{(x,a)\sim \Dc}\E_{y\sim\pi_\theta(\cdot|x)} \left[ r_{0/1}(y,a) \cdot \nabla_\theta\log\pi_\theta(y|x) \right]
 \,,\label{eq:ascent}
\end{align}
where $\widehat{\cdot}$ denotes empirical averages and we have applied the log-derivative trick:
$ \nabla_\theta\pi_\theta(y|x) =  \pi_\theta(y|x)\cdot\nabla_\theta\log\pi_\theta(y|x)\,.
$
The REINFORCE-style algorithms reviewed below provide different approximations of the per-example expected gradient $G_{0/1}(\theta;(x,a)) = \nabla_{\theta}\rho_{\theta}(x,a)$.

\paragraph{REINFORCE.} Classical REINFORCE  \citep{williams1992simple} approximates the expectation over $y\sim\picond$ in \eqref{eq:ascent} using the empirical average over $N$ sampled responses. This yields the per-example gradient:
 \begin{align}\label{eq:reinforce simple}
 \greinforce(\theta;(x,a)) = \frac{1}{N}\sum_{i\in[N]}r_i\cdot \nabla_i = \rhohat \cdot \nablap\,.
 \end{align}
While simple, this estimator suffers from high variance and intuitively provides poor updates since it entirely ignores wrong responses.

\paragraph{RLOO.} REINFORCE with Leave-One-Out (RLOO) \citep{kool2019buy,ahmadian2024back} reduces variance by replacing in Eq. \eqref{eq:reinforce simple} rewards $r_i$ with advantage scores that subtract a leave-one-out baseline:
\[
A_i = r_i - \frac{1}{N-1}\sum_{j\neq i} r_j\,.
\]
This also introduces learning signal from wrong responses. 
Since $r_i \in \{0,1\}$, the advantage scores depend only on whether response $y_i$ is correct or wrong:
\begin{align}\label{eq:RLOO advantage}
A^+ = 1-\frac{N \rhohat-1}{N-1} = \frac{N(1-\rhohat)}{N-1}
\qquad\text{and}\qquad
A^- = -\frac{N \rhohat}{N-1}\,.
\end{align}
Ignoring the common factor $N/(N-1)$ across all examples, the RLOO gradient simplifies to:
\begin{align}\label{eq:RLOO}
 \Ghat_\rloo(\theta;(x,a)) &= A^+\cdot \rhohat\cdot\nablap + A^-\cdot (1-\rhohat)\cdot \nablam\nn\\
 &= \rhohat \cdot (1-\rhohat) \cdot\left[\nablap -  \nablam\right]\,.
\end{align}
This is an unbiased estimator of $\nabla_{\theta}\rho_{\theta}(x,a)$ with reduced variance compared to REINFORCE.

\paragraph{GRPO.} GRPO \citep{GRPO_original} normalizes advantages by their empirical standard deviation:
\[
A_i = \frac{r_i-\rhohat}{\sqrt{\rhohat(1-\rhohat)}}\,.
\]
This yields correct/wrong advantage scores:\footnote{Note that GRPO directly uses mean estimates across all responses (rather than leave-one-out). More importantly, it normalizes by standard deviation, making the gradient a biased estimator of $\nabla_\theta\rho_\theta(x,a)$.}
\begin{align}\label{eq:GRPO advantage}
A^+ = \frac{1-\rhohat}{\sqrt{\rhohat (1-\rhohat)}} = \sqrt{\frac{1-\rhohat}{\rhohat}}
\qquad\text{and}\qquad
A^- = \frac{-\rhohat}{\sqrt{\rhohat (1-\rhohat)}} = -\sqrt{\frac{\rhohat}{1-\rhohat}}\,.
\end{align}
Thus, {in the fully-online (single-update-per-rollout) regime where the PPO-style clipping term is inactive, and ignoring the KL regularization term} (see App.~\ref{sec:grpo details} for the full GRPO objective and discussion), the GRPO gradient can be expressed conveniently as \citep{deng2025effect}:
\begin{align}\label{eq:GRPO}
 \Ghat_\grpo(\theta;(x,a)) 
 =  \sqrt{\rhohat \cdot (1-\rhohat)} \cdot\left[\nablap -  \nablam\right]\,.
\end{align}
Comparing Eqs. \eqref{eq:RLOO} and \eqref{eq:GRPO}, we see both RLOO and GRPO have the form $[\nablap - \nablam]$ scaled by a function of $\rhohat$. RLOO scales by $\rhohat(1-\rhohat)$, while GRPO uses $\sqrt{\rhohat(1-\rhohat)}$. 
Throughout, we call such multiplicative factors the \emph{effective gradient weights} of the respective algorithm.

\section{Pass@K by Direct Differentiation}\label{sec:Pass@K}

In analogy to the 0/1 case, the natural approach to optimize the expected Pass@K reward is stochastic gradient ascent with gradient 
\[
 G_\pk(\theta) 
 =  \Ehat_{(x,a)\sim \Dc}[\nabla_{\theta}\rhoktheta(x,a)]\,.
\]
To compute the gradient, we rewrite the per-example Pass@K reward in terms of the per-example 0/1 reward. Using the IID nature of generated responses:
\begin{align}\nn
\rhoktheta(x,a)
&=\Pr_{\{y_i\}\sim\picond}\left(\exists i \in [K] : y_i \text{ is correct}\right)=1-\prod_{i\in[K]}\Pr_{y_i\sim\picond}\left( y_i \text{ is \emph{not} correct}\right)=1-(1-\rho_\theta(x,a))^{K}\,.%\label{eq:pass@K 0/1}
\end{align}
Thus, following \citet{sadegh}: $\nabla_\theta \rhoktheta(x,a) =K(1-\rho_\theta(x,a))^{K-1}\cdot\nabla_\theta\rho_\theta(x,a)=K(1-\rho_\theta(x,a))^{K-1}\cdot G_{0/1}(\theta;(x,a))$. % we apply the log-derivative trick:
% \begin{align*}
% \nabla_\theta \rhoktheta(x,a) &= -\nabla_\theta (1-\rho_\theta(x,a))^{K}=K(1-\E_{y\sim\picond}r_{0/1}(y,a))^{K-1}\cdot\nabla_\theta\rho_\theta(x,a)\\
% &=K(1-\rho_\theta(x,a))^{K-1}\cdot\E_{y\sim\picond}\left[r_{0/1}(y,a) \nabla_\theta\log\picondy{y}\right]\,.
% \end{align*}
Now, we recognize that the term in parentheses equals the per-example Fail@(K-1)$=1-$Pass@(K-1) reward. Hence, we obtain the final expression for the per-example (population) gradient:
\begin{align}\label{eq:G Pass@K general}
 G_\pk(\theta;(x,a))&:= K\cdot
\underbrace{\left(1-\rhokmtheta(x,a)\right)}_{\text{Fail@(K-1)}}\,\cdot\,G_{0/1}(\theta;(x,a))\,.
\end{align}
This is a reweighting of the respective gradient $G_{0/1}(\theta;(x,a))=\E_{y\sim\picond}\left[r_{0/1}(y,a) \nabla_\theta\log\picondy{y}\right]$ for the 0/1 reward. For $K=1$,  the gradient updates are indeed equivalent.

\paragraph{REINFORCE$_K$.} With access to a finite number of responses per example, we need to approximate $G_\pk(\theta;(x,a))$ with an empirical estimate. 
To arrive at an unbiased estimator, we require an unbiased estimator for the Fail@(K-1) term in Eq.~\eqref{eq:G Pass@K general}. For this, we leverage the following unbiased estimator of $\rhoktheta(x,a)$ \citep{codex}:
\begin{align}
\rhohatk:=\rhohatktheta(x,a)=1-{\binom{N^-}{K}}\Big/{\binom{N}{K}}\,,\label{eq:unbiased estimate of Pass@k}
\end{align} 
representing the probability of drawing $K$ responses without replacement from $N$ total responses such that at least one is correct. 
Having unbiased estimators for both terms in Eq.~\eqref{eq:G Pass@K general}, we combine them via the leave-one-out trick to form an unbiased estimator of their product:
\begin{align}\label{eq:reinforce_K simple}
\Ghat_{\reinforce_K}(\theta;(x,a)) = \frac{K}{N}\sum_{i\in[N]} (1-\rhohatkm^{\,\loo,i}) \cdot r_i \cdot \nabla_i,
\end{align}
where $1-\rhohatkm^{\,\loo,i}$ is the leave-one-out unbiased estimator of Fail@(K-1) excluding the $i$-th response. For binary rewards, simple algebra (deferred to Appendix~\ref{sec:proof of om}) shows that
\begin{align}
    1-\rhohatkm^{\,\loo,i} = \begin{cases}
    \omkp:=(1-\rhohatk)\big/(1-\rhohat-\frac{K-1}{N})  & \text{if } y_i \text{ is correct}\\
    \omkm:=(1-\rhohatk)\big/(1-\rhohat)  & \text{if } y_i \text{ is wrong}
    \end{cases}\,.\label{eq:om}
\end{align}
The weights $\omkp$ and $\omkm$ are leave-one-out empirical estimates of Fail@(K-1), both expressed in terms of the empirical Pass@K reward $\rhohatk$.
With these, the REINFORCE$_K$ update takes the following final form:
\begin{align} \label{eq:REINFORCE_K}
\Ghat_{\reinforce_K}(\theta;(x,a)) &=  \omkp\cdot \rhohat \cdot \nablap
=\omkp\cdot \Ghat_{\reinforce}(\theta;(x,a))\,.
\end{align}
Reweighing the vanilla REINFORCE gradient\footnote{We have dropped a constant $K$ factor across all examples, which can be absorbed into the learning rate.} by $\omkp$, the empirical Fail@(K-1) of the other $N-1$ samples,  amplifies gradients of examples with rare correct responses (large $N^-$) and suppresses gradients of  redundant examples (small $N^-$). When $N^-<K-1$, the weight is $0$ since Pass@K is already guaranteed.
As in the 0/1 case, REINFORCE is unbiased but high-variance estimator. We can reduce variance by substituting rewards $r_i$ in Eq.~\eqref{eq:reinforce_K simple} with appropriately computed advantages. This way, we also introduce negative gradients.

 \paragraph{RLOO$_K$.} Using the RLOO advantage scores from Eq.~\eqref{eq:RLOO advantage} in place of $r_i$ in Eq. \eqref{eq:reinforce_K simple}, the RLOO algorithm for Pass@K optimization has gradient (proportional to, ignoring constants $N/(N-1)$ and $K$):
 \begin{align}\label{eq:rloo pass@K}
 \Ghat_{\rloo_K}(\theta;(x,a)) = \omkp\cdot \rhohat (1-\rhohat)\cdot\nablap - \omkm\cdot \rhohat (1-\rhohat)\cdot \nablam\,.
\end{align}
Note the  asymmetric weighting for correct versus wrong responses due to Pass@K maximization. This is an unbiased estimator (see App.~\ref{sec:unbiased_proof} for proof) of $G_\pk(\theta;(x,a))$ with lower variance compared to $\Ghat_{\reinforce_K}(\theta;(x,a))$.

\paragraph{GRPO$_K$.} Analogously, we obtain a GRPO update for Pass@K optimization by using the GRPO advantage scores from Eq.~\eqref{eq:GRPO advantage} in place of $r_i$ in Eq. \eqref{eq:reinforce_K simple}:
 \begin{align}\label{eq:grpo pass@K}
 \Ghat_{\grpo_K}(\theta;(x,a)) = \omkp\cdot \sqrt{\rhohat (1-\rhohat)}\cdot\nablap - \omkm\cdot \sqrt{\rhohat (1-\rhohat)}\cdot \nablam\,.
\end{align}

\section{Pass@K by Advantage Shaping}\label{sec:bydance}

% In Section~\ref{sec:Pass@K}, we saw that directly maximizing the Pass@K reward yields
The two Pass@K optimization algorithms, RLOO$_K$ and GRPO$_K$, of Section \ref{sec:Pass@K} differ from their vanilla counterparts through \emph{asymmetric} advantage reweighting. Specifically, their effective advantage score  is:
\begin{align}\label{eq:sadegh vs vanilla}
     A^\pm_\pk = \omkpm \cdot A^\pm\,,
 \end{align}
where $\omkpm$ are defined in Eq.~\eqref{eq:om}, and $A^\pm$ are the vanilla advantage scores (e.g., Eq.~\eqref{eq:GRPO advantage} for GRPO).

\begin{figure}[t] 
    \centering
    \includegraphics[width=\textwidth]{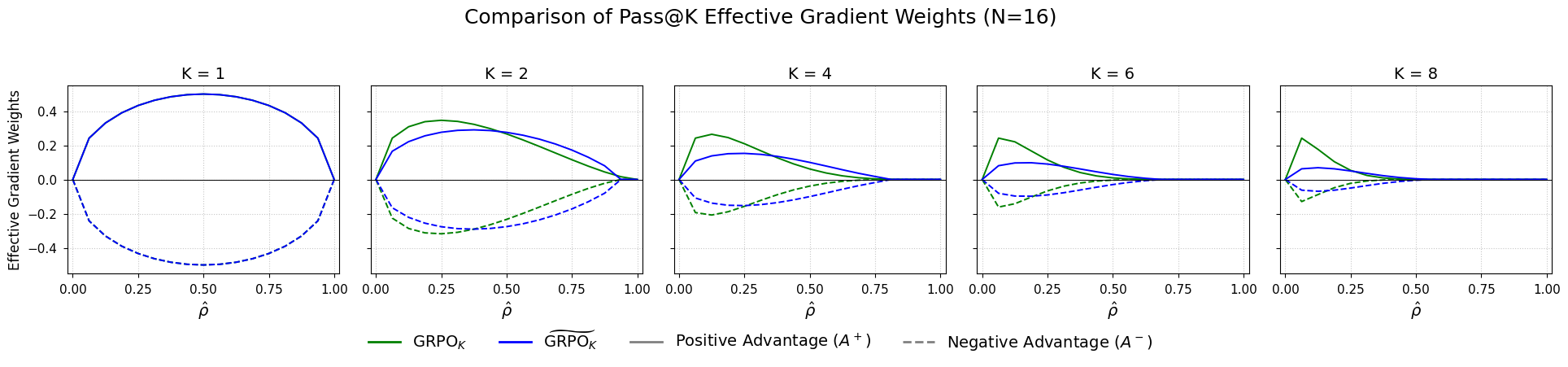}
    \captionsetup{width=\textwidth}
    \vspace{-0.3in}
\caption{Effective gradient weights of GRPO$_K$ ($A_K^\pm$ in Eq.~\eqref{eq:sadegh vs vanilla}) vs. $\grpot$ ($\widetilde{A}_K^\pm$ in Eq.~\eqref{eq:wenlong vs vanilla}). These weights scale the gradients of correct (solid lines) and wrong (dashed lines) responses by a factor $A^+ \cdot \hat{\rho}$ and $A^- \cdot (1-\hat{\rho})$, respectively. The scores are plotted against the empirical 0/1 rate ($\hat{\rho}$) for a fixed sample size $N=16$ and varying $K$. For $K=1$, both weights coincide with vanilla GRPO weights $\sqrt{\rhohat(1-\rhohat)}$ (Eq. \eqref{eq:GRPO}). {Both methods downweight easy examples (large $\hat\rho$), but GRPO$_K$ is more aggressive and uses asymmetric correct/wrong weights, unlike the symmetric scaling in $\grpot$.}}
\label{fig:passk_advantage_comparison}
\vspace{-0.1in}
\end{figure}

\paragraph{$\grpot$ via advantage shaping.}
We now review the GRPO variant proposed by \citet{passk_bydance}, which we denote $\grpot$, and which
%Unlike the reweighting in \eqref{eq:sadegh vs vanilla}, which arises from direct Pass@K reward optimization, \citet{passk_bydance} 
directly shapes GRPO's advantages to favor Pass@K improvements.  Their proposed advantage scores, which we denote $\At_\pk^\pm$, are given in \citep[App.~B3]{passk_bydance} as: \begin{align}
     \At_\pk^+ = \sqrt{({1-\rhohat_K})/{\rhohat_K}}\,\qquad\text{and}\qquad     \At_\pk^- = ({1-\rhohat_K - \omkm})/{\sqrt{\rhohat_K(1-\rhohat_K)}}  \,, \label{eq:bydance passk A-}
 \end{align}
expressed in our notation of empirical Pass@K reward $\rhohat_K$ (Eq.~\eqref{eq:unbiased estimate of Pass@k}) and the leave-one-wrong-out Fail@(K-1) estimator $\omkm$ (Eq.~\eqref{eq:om}). 
As noted in \citep{thr}, this can be simplified: From Eq.~\eqref{eq:om}, we have $1-\rhohatk=(1-\rhohat)\cdot\omkm$. Thus, $\At_\pk^-=-\frac{\rhohat}{1-\rhohat}\,\At_\pk^+$, and  in terms of vanilla GRPO advantage:
 \begin{align}\label{eq:wenlong vs vanilla}
     \At_\pk^\pm = \omt \cdot A^\pm,\qquad \text{where }\,\,\,\,\, \omt:=\sqrt{\frac{1-\rhohat_K}{\rhohat_K}} \cdot \sqrt{\frac{\rhohat}{1-\rhohat}}\,.
 \end{align}
Overall, the per-example gradient  of $\grpot$ is:
\begin{align}\label{eq:grpot}
 \Ghat_{{\grpot}}(\theta;(x,a))=\omt\cdot \Ghat_{\grpo}(\theta;(x,a)) = \sqrt{\frac{1-\rhohat_K}{\rhohat_K}}\cdot\rhohat\cdot\left[\nablap - \nablam \right]\,.
\end{align}

\paragraph{Comparing the two approaches.}
Both methods reweight vanilla advantages and have a similar effect: compared to the vanilla methods of Sec.~\ref{sec:0/1} that optimize 0/1 reward, they aggressively downweight gradient contributions from examples with medium-to-high empirical 0/1 rewards, thereby favoring more difficult examples. Yet, they differ structurally: The reweighting in Eq.~\eqref{eq:wenlong vs vanilla} is \emph{symmetric} (same multiplier $\omt$ for correct and wrong responses), whereas  Eq.~\eqref{eq:sadegh vs vanilla} uses \emph{asymmetric} weights ($\omkp\neq \omkm$). Moreover, direct optimization more aggressively dampens contributions when the empirical reward is high (and amplifies more when low). 
Figure~\ref{fig:passk_advantage_comparison} visualizes the effective weights each algorithm applies to correct vs. wrong gradients.

\section{Bridging the Views}\label{sec:uni}

\subsection{Reverse-Engineering: Surrogate Reward from Advantage Shaping}\label{sec:reverse passk}

Section~\ref{sec:bydance} showed that both Pass@K methods, direct optimization (GRPO$_K$) and advantage shaping ($\grpot$), can be expressed as reweightings of vanilla GRPO. We now tighten this connection by showing that advantage shaping also admits an interpretation as direct optimization of surrogate Pass@K reward.

\begin{claim}[Reverse-Engineering \grpot]\label{claim:bydance_objective}
%Recall $\rhohat:=\rhohat_\theta(x,a)$ denotes the per-example empirical 0/1 reward and $\rhohat_K:=\rhohatktheta(x,a)$ denotes the per-example empirical estimate of the Pass@K reward (Eq. \eqref{eq:unbiased estimate of Pass@k}). 
The $\grpot$ algorithm by \citet{passk_bydance} rescales the vanilla GRPO empirical gradient, yielding updates as shown in Eq. \eqref{eq:grpot}.
    % \begin{align}
    %     \widetilde G_{\grpo_K}(\theta;(x,a)) \;=\; \sqrt{\frac{1-\rhohat_K}{\rhohat_K}}\sqrt{\frac{\rhohat}{1-\rhohat}}\cdot \widehat G_\grpo(\theta;(x,a))\,.
    % \end{align}
    For sufficiently large sample size $N\gg K$, this corresponds to direct maximization  of the per-example surrogate reward
    \begin{align}\label{eq:arcsin pk}
   \frac{2}{K} \arcsin\!\Big(\sqrt{\rhoktheta(x,a)}\,\Big)\,,
    \end{align}
    where $\rhoktheta(x,a):=\E_{\{y_i\}_{i\in[K]}\sim \pi_\theta(\cdot|x)}\,r_\pk( \{y_i\}_{i\in[K]} ,a )$ is the expected Pass@K reward per example.
\end{claim}

Thus, the advantage-shaping approach of \citet{passk_bydance} implicitly optimizes a smooth, differentiable transformation of the Pass@K reward, providing further justification for the validity of the different approach by which the authors arrive at it. Note that the surrogate reward is maximized when the Pass@K reward equals 1, and is actually strictly monotone, so both the original and surrogate objectives share the same optimal solution, though the arcsin surrogate reward could lead to a different optimization path.

\paragraph{Connection to variance stabilizing transforms.}
The arcsin transform that emerges is noteworthy: it is a \emph{variance-stabilizing transformation} (VST) for the binomial distribution. Specifically, for $\mathcal{X}\sim \text{Bin}(M, p)$, the arcsin transform satisfies $\text{Var}(\sqrt{M}\cdot \arcsin(\sqrt{\mathcal{X}/M})) \approx 1/4$ independent of $p$ \citep{VST_anscombe1948transformation}.
This connection is not coincidental: \citet[Sec.~2.2]{passk_bydance} arrived at their method through arguments using batched sampling, where $N$ responses are partitioned into $N/K$ groups of size $K$. In this setting, the number of successful groups follows a binomial distribution with $M = N/K$ trials, and the arcsin transform would stabilize the variance of this estimate. Formalizing a potential the connection between VSTs and optimization stability when applying empirical gradients to transformed rewards is open direction for future work.

\subsection{Forward-Engineering: Advantage Shaping from Surrogate Reward}\label{sec:forward}
$\grpot$ can be reverse-engineered as optimizing an implicit population-level surrogate reward. Now, we address the forward direction: 
\emph{How to arrive at $\grpot$ starting from the arcsin-transform surrogate reward?}

\begin{claim}[\grpot=RLOO on Surrogate Reward]\label{claim:forward_grpot}
The $\grpot$ policy-gradient update by \citet{passk_bydance} is equivalent to an RLOO-style policy gradient update to the surrogate per-example reward
$\frac{2}{K} \arcsin\!\big(\sqrt{\rhoktheta(x,a)}\,\big)\,,$ 
 from Eq. \eqref{eq:arcsin pk}.
\end{claim}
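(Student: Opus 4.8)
The plan is to run the forward-engineering recipe of Contribution~C-2 directly on the arcsin surrogate, treating it as a function of the per-example $0/1$ reward $\rho_\theta$ via the identity $\rhoktheta=1-(1-\rho_\theta)^K$ from Eq.~\eqref{eq:pass@K 0/1}. Concretely, let $\rho_K(\cdot)$ denote the scalar map $\rho\mapsto 1-(1-\rho)^K$ and define the composite $\Phi(\rho):=\tfrac{2}{K}\arcsin\!\big(\sqrt{\rho_K(\rho)}\big)$, so the target surrogate per-example reward is exactly $\Phi(\rho_\theta)$. The recipe then prescribes three moves: differentiate, $\nabla_\theta\Phi(\rho_\theta)=\Phi'(\rho_\theta)\,\nabla_\theta\rho_\theta$; replace the scalar derivative $\Phi'(\rho_\theta)$ by its empirical plug-in $\Phi'(\rhohat)$; and replace the population $0/1$ gradient $\nabla_\theta\rho_\theta$ by the RLOO proxy $\Ghat_\rloo=\rhohat(1-\rhohat)[\nablap-\nablam]$ of Eq.~\eqref{eq:RLOO}. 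The goal is to verify that this produces exactly the $\grpot$ update of Eq.~\eqref{eq:grpot}.

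The core computation is the chain rule for $\Phi'$. Using $\rho_K'(\rho)=K(1-\rho)^{K-1}$, the identity $1-\rho_K(\rho)=(1-\rho)^K$, and $\tfrac{d}{du}\arcsin(\sqrt u)=\tfrac{1}{2\sqrt{u(1-u)}}$, I would obtain
\[
\Phi'(\rho)=\frac{2}{K}\cdot\frac{\rho_K'(\rho)}{2\sqrt{\rho_K(\rho)\,(1-\rho_K(\rho))}}=\frac{(1-\rho)^{K/2-1}}{\sqrt{\rho_K(\rho)}}\,.
\]
Evaluating at $\rhohat$ and multiplying by the RLOO proxy, the factor $(1-\rhohat)$ inside $\Ghat_\rloo$ combines with $(1-\rhohat)^{K/2-1}$ to give $(1-\rhohat)^{K/2}$, so that
\[
\Phi'(\rhohat)\,\Ghat_\rloo=\frac{(1-\rhohat)^{K/2}}{\sqrt{\rho_K(\rhohat)}}\,\rhohat\,[\nablap-\nablam]\,.
\]

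The final step is to re-express this in terms of $\rhohatk$ and match Eq.~\eqref{eq:grpot}. This is where the hypothesis $N\gg K$ enters: the without-replacement estimator $\rhohatk=1-\binom{N^-}{K}/\binom{N}{K}$ satisfies $\rhohatk\approx 1-(1-\rhohat)^K=\rho_K(\rhohat)$, whence $(1-\rhohat)^{K/2}=\sqrt{1-\rho_K(\rhohat)}\approx\sqrt{1-\rhohatk}$ and $\sqrt{\rho_K(\rhohat)}\approx\sqrt{\rhohatk}$. Substituting gives $\Phi'(\rhohat)\,\Ghat_\rloo\approx\sqrt{\tfrac{1-\rhohatk}{\rhohatk}}\,\rhohat\,[\nablap-\nablam]$, which is precisely $\Ghat_{\grpot}$ in Eq.~\eqref{eq:grpot}.

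The main obstacle I anticipate is conceptual rather than computational: making the "RLOO on the surrogate" statement unambiguous. One must insist on treating $\Phi$ as a function of the $0/1$ reward $\rho$ and substituting the \emph{symmetric} RLOO proxy $\Ghat_\rloo$; the tempting alternative of differentiating with respect to $\rho_K$ and invoking the \emph{asymmetric} RLOO$_K$ proxy of Eq.~\eqref{eq:rloo pass@K} (with $\omkp\neq\omkm$) would not collapse to the symmetric multiplier $\omt$, and hence would not reproduce $\grpot$. Beyond this, the only quantitative care needed is the $N\gg K$ control of the plug-in approximation $\rhohatk\approx\rho_K(\rhohat)$; everything else is a self-contained chain-rule calculation that is the exact inverse of the reverse-engineering in \cref{claim:bydance_objective}, so the two claims rest on the same underlying identity.
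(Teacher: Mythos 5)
Your proposal is correct and takes essentially the same route as the paper: differentiate the arcsin surrogate via the chain rule, plug empirical quantities into the scalar prefactor, and substitute the symmetric RLOO proxy $\rhohat(1-\rhohat)[\nablap-\nablam]$ for $\nabla_\theta\rho_\theta$ (your closing remark about why one must use the symmetric proxy rather than the asymmetric RLOO$_K$ one is exactly the right reading of the claim). The only cosmetic difference is where $N\gg K$ enters: the paper keeps the prefactor in mixed form $\frac{1}{\sqrt{\rho_K(1-\rho_K)}}\cdot\frac{1-\rho_K}{1-\rho}$ and substitutes $\rho_K\to\rhohatk$, $\rho\to\rhohat$ so the final identification with $\Ghat_{\grpot}$ in Eq.~\eqref{eq:grpot} is exact algebra, whereas you express everything in terms of $\rhohat$ and invoke $\rhohatk\approx 1-(1-\rhohat)^K$ at the very end---both placements are legitimate under the claim's asymptotic hypothesis.
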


To see this, we work as in Sec. \ref{sec:Pass@K}, but now with the arcsin-transformed objective. We proceed in two steps: (i) By direct differentiation, the population per-example gradient is
\begin{align}
\frac{2}{K}\nabla_\theta \arcsin\!\big(\sqrt{\rhoktheta}\big) &= \frac{1}{K}\frac{1}{\sqrt{\rhoktheta(1-\rhoktheta)}}\nabla_\theta\rhoktheta\,=
\frac{1}{\sqrt{\rhoktheta(1-\rhoktheta)}}(1-\rho_\theta)^{K-1}\nabla_\theta\rho_\theta\nn
\\
&=
\frac{1}{\sqrt{\rhoktheta(1-\rhoktheta)}}\cdot\frac{1-\rhoktheta}{1-\rho_\theta}\cdot\nabla_\theta\rho_\theta
,\label{eq:popgrad forward pk}
\end{align}
(ii) To obtain an empirical version, we (a) approximate the  multiplicative term of the gradient by substituting $\rho_{K,\theta}$ and $\rho_\theta$ with their empirical counterparts $\rhohatk$ and $\rhohat$, and (b) approximate the expected 0/1 reward gradient term with the RLOO gradient update from Eq. \eqref{eq:RLOO} (recall Eq. \eqref{eq:rloo pop proxy}). The update then becomes:
\begin{align}
&\frac{1}{\sqrt{\rhohatk(1-\rhohatk)}}\cdot\frac{1-\rhohatk}{1-\rhohat}\cdot\rhohat(1-\rhohat)\cdot\left[\nablap-\nablam\right] = \sqrt{\frac{1-\rhohat_K}{\rhohat_K}}\cdot\rhohat\cdot\left[\nablap - \nablam \right]\,,\nn
\end{align}
and we recognize $\Ghat_{{\grpot}}(\theta;(x,a))$ in the final expression (see Eq. \eqref{eq:grpot}).

\paragraph{Alternative surrogate rewards.}
This connection invites exploration of alternative variance-stabilizing transformations. The arcsin transform is a VST for the binomial distribution, which aligns with the batched sampling scheme of \citet[Sec.~2.2]{passk_bydance}. However, the standard  combinatorial Pass@K estimator does not strictly follow a binomial distribution. 
Also, the analysis above assumes $N$ grows large while $K$ remains constant. Designing surrogate rewards that account for joint scaling of $N$ and $K$ is an interesting direction, particularly for the practically relevant small-$N$ regime where computational budgets are limited.

\paragraph{Biased versus unbiased estimation.} 
In the derivation of Claim~\ref{claim:forward_grpot}, contrary to our construction of RLOO$_K$ in Sec.~\ref{sec:Pass@K}, we did not insist on maintaining unbiasedness of the gradient estimator.  $\frac{1-\rhohatk}{1-\rhohat}$ is a biased estimator of the multiplicative term in Eq.~\eqref{eq:popgrad forward pk}, and even replacing $\rhohatk$ with a leave-one-out estimate would not remove this bias since the product of two unbiased estimators is generally biased. However, note that even the strong vanilla GRPO baseline does not implement an unbiased empirical estimate of the gradient.

\paragraph{Special case: GRPO.} In fact, applying the logic of Claim~\ref{claim:forward_grpot} to the $K=1$ case provides an alternative interpretation to GRPO's normalization by the reward's standard deviation.

\begin{corollary}[GRPO as Surrogate Reward Optimization]\label{cor:grpo_surrogate}
The $\grpo$  update (Eq.~\eqref{eq:GRPO}) by \citet{GRPO_original} is equivalent to an RLOO-style policy gradient update for the surrogate per-example reward \[
   2 \arcsin\!\Big(\sqrt{\rho_\theta(x,a)}\,\Big)\,
   \]
    where  $\rho_\theta(x,a):=\E_{y\sim \pi_\theta(\cdot|x)}\,r_{0/1}( y ,a )$ is the per-example expected 0/1 reward.
\end{corollary}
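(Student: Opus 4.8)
The plan is to specialize the forward-engineering recipe of Claim~\ref{claim:forward_grpot} to the case $K=1$, where Pass@K collapses to the 0/1 reward ($\rhoktheta=\rho_\theta$) and the surrogate $\tfrac{2}{K}\arcsin(\sqrt{\rhoktheta})$ reduces to $F(\rho_\theta):=2\arcsin(\sqrt{\rho_\theta})$. Following the two-step recipe from Contribution~C-2, I would first differentiate $F$ in closed form, then replace every population quantity by its empirical analogue and the population 0/1 gradient by the RLOO proxy, and finally verify that the resulting update coincides with the vanilla GRPO gradient of Eq.~\eqref{eq:GRPO}.

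For step (i), I would apply the chain rule. Since $\frac{d}{d\rho}\arcsin(\sqrt{\rho})=\tfrac{1}{2\sqrt{\rho(1-\rho)}}$, differentiating the surrogate gives
\begin{align}
\nabla_\theta F(\rho_\theta) = \frac{1}{\sqrt{\rho_\theta(1-\rho_\theta)}}\cdot\nabla_\theta\rho_\theta\,,\nn
\end{align}
so the multiplicative prefactor is $F'(\rho_\theta)=1/\sqrt{\rho_\theta(1-\rho_\theta)}$. For step (ii), I replace $F'(\rho_\theta)$ by $F'(\rhohat)=1/\sqrt{\rhohat(1-\rhohat)}$ and substitute the RLOO proxy $\Ghat_\rloo(\theta;(x,a))=\rhohat(1-\rhohat)[\nablap-\nablam]$ from Eq.~\eqref{eq:RLOO} in place of $\nabla_\theta\rho_\theta$ (justified by Eq.~\eqref{eq:rloo pop proxy}). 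The factor $\rhohat(1-\rhohat)$ then partially cancels against the $1/\sqrt{\rhohat(1-\rhohat)}$ prefactor, leaving
\begin{align}
\frac{1}{\sqrt{\rhohat(1-\rhohat)}}\cdot\rhohat(1-\rhohat)\,[\nablap-\nablam]=\sqrt{\rhohat(1-\rhohat)}\,[\nablap-\nablam]\,,\nn
\end{align}
which is exactly $\Ghat_\grpo(\theta;(x,a))$ from Eq.~\eqref{eq:GRPO}, completing the argument.

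The calculation itself is routine; the only substantive point, and the place where $N\gg1$ enters, is the legitimacy of the RLOO-proxy substitution in step (ii). Equation~\eqref{eq:rloo pop proxy} holds exactly only in the large-$N$ limit, so the claimed equivalence is an asymptotic statement rather than an exact finite-sample identity — this is the part I would flag as the main subtlety. As in the biased-versus-unbiased discussion preceding the corollary, the product $F'(\rhohat)\cdot\Ghat_\rloo$ is a biased estimator of $\nabla_\theta F(\rho_\theta)$ because $F'$ is nonlinear while $\rhohat$ is random; this bias is harmless for the interpretive claim but means GRPO should be read as RLOO applied to the arcsin surrogate only up to this large-$N$ approximation (and up to the clipping that is inactive in the fully-online regime, cf.\ App.~\ref{sec:grpo details}).
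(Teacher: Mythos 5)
Your proposal is correct and takes essentially the same route as the paper: the paper likewise computes the population surrogate gradient $\frac{1}{\sqrt{\rho(1-\rho)}}\nabla_\theta\rho$, replaces the prefactor by $\frac{1}{\sqrt{\rhohat(1-\rhohat)}}$, and substitutes the RLOO estimate $\rhohat(1-\rhohat)[\nablap-\nablam]$ to recover $\sqrt{\rhohat(1-\rhohat)}[\nablap-\nablam]$, i.e., the vanilla GRPO update of Eq.~\eqref{eq:GRPO}. Your framing as the $K=1$ specialization of Claim~\ref{claim:forward_grpot}, and your caveat about the large-$N$/biasedness nature of the equivalence, both match the paper's own presentation.
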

Indeed, the population gradient of this surrogate is $  \nabla_\theta \rho/ {\sqrt{\rho(1-\rho)}}$. Approximating the leading multiplicative factor by $1/{\sqrt{\rhohat(1-\rhohat)}}$ and applying an RLOO-style estimator for $\nabla_\theta \rho$ (which is proportional to $\rhohat(1-\rhohat)[\nablap - \nablam]$ by Eq. \eqref{eq:RLOO}) yields update  $\sqrt{\rhohat(1-\rhohat)}[\nablap - \nablam]$,  matching that of vanilla GRPO.

This surrogate reward perspective should be distinguished from GRPO's original formulation. As presented by \citet{GRPO_original}, GRPO maximizes a PPO-style objective designed for the multi-epoch off-policy setting, where the expectation is over the old policy. Our surrogate reward, in contrast, identifies the on-policy population objective that the GRPO gradient ascends in the fully online setting. We detail this  distinction in App. \ref{sec:grpo details}, showing why the PPO-style objective cannot directly yield this on-policy surrogate.

{\subsection{Equivalence of Advantage Shaping and Surrogate Reward}\label{sec:equiv}
The surrogate-reward perspective developed in the previous part of this section for $\grpot$ (and its special case vanilla $\grpo$ for $K=1$) applies more generally to a broad class of binary-reward RLVR policy gradient algorithms whose per-example gradient can be expressed as:
\begin{align}\label{eq:gen REINFORCE omega}
\Ghat(\theta;(x,a)) = \omega_+(\rhohat)\cdot\nablap-\omega_-(\rhohat)\cdot\nablam\,,
\end{align}
for some functions $\omega_\pm:[0,1]\rightarrow\R$ which determine the effective gradient weights multiplying the average empirical gradients of correct and incorrect responses respectively. This form corresponds to the middle column (Weighted Empirical Gradient) of Table \ref{tab:algo_summary_adv}; as seen, all algorithms we study here fit this expression.}

{The surrogate reward perspective provides an interpretation of any such algorithm in the population limit of large $N$. Specifically, the following claim generalizes Claim \ref{claim:bydance_objective}.
\begin{claim}\label{claim:reverse}
In the population limit $N\rightarrow\infty$, a policy-gradient algorithm with per-example gradient of the form~\eqref{eq:gen REINFORCE omega} performs  gradient ascent on a per-example surrogate reward
$
F\!\left(\rho_\theta(x,a)\right),
$
where $F:[0,1]\rightarrow\R$ is differentiable and satisfies for $u\in(0,1)$:
\[
F'(u)\;=\;\frac{\omega_+(u)}{u}+\frac{\omega_-(u)}{1-u}.
\]
Consequently, the update averaged over examples targets maximizing the population surrogate objective
\begin{align}\label{eq:surrogate reward claim gen}
\E_{(x,a)\sim\Pc}\Big[\,F\!\big(\E_{y\sim\picond} r(y,a)\big)\,\Big].
\end{align}
This objective is a surrogate transformation of the conventional RLVR objective in Eq.~\eqref{eq:vanilla objective}.
\end{claim}
\begin{proof}
Replacing empirical quantities $\hat{\cdot}$ with their population counterparts, the population-level per-example gradient is
\[
G(\theta;(x,a)) \;=\; \omega_+(\rho_\theta)\cdot\E_{y\sim\picond}\!\left[\nabla_\theta \log\pi_\theta(y|x)\,\middle|\,r(y,a)=1\right]
\;-\;
\omega_-(\rho_\theta)\cdot\E_{y\sim\picond}\!\left[\nabla_\theta \log\pi_\theta(y|x)\,\middle|\,r(y,a)=0\right],
\]
where we use our shorthand $\rho_\theta=\rho_\theta(x,a)$. Evaluating the conditional expectations (Lemma~\ref{lem:conditional gradient identity} in App.~\ref{sec:aux}) yields
\[
G(\theta;(x,a))
=
\omega_+(\rho_\theta)\cdot\frac{\nabla_\theta\rho_\theta}{\rho_\theta}
+
\omega_-(\rho_\theta)\cdot\frac{\nabla_\theta\rho_\theta}{1-\rho_\theta}
=
\left[\frac{\omega_+(\rho_\theta)}{\rho_\theta}+\frac{\omega_-(\rho_\theta)}{1-\rho_\theta}\right]\nabla_\theta\rho_\theta
=:\;
F'(\rho_\theta)\cdot \nabla_\theta\rho_\theta.
\]
Thus, in the population limit, the algorithm updates are of the form
\begin{align}\label{eq:SGA gen}
\theta_{t+1} \;=\; \theta_t + \eta\cdot F'(\rho_\theta)\cdot \nabla_\theta\rho_\theta,
\end{align}
which is gradient ascent on the surrogate reward $F(\rho_\theta)$ (up to an additive constant in $F$).
\end{proof}
\paragraph{Examples (reverse-engineering).}
For $\omega_-(u)=\omega_+(u)=u(1-u)$ (RLOO, Eq.~\eqref{eq:RLOO}), we obtain $F'(u)=1$, hence $F(u)=u$ (up to a constant).
Similarly, for $\omega_+(u)=u$ and $\omega_-(u)=0$ (REINFORCE, Eq.~\eqref{eq:reinforce simple}), we also have $F'(u)=1$ and thus $F(u)=u$.
For GRPO, where $\omega_-(u)=\omega_+(u)=\sqrt{u(1-u)}$ (Eq.~\eqref{eq:GRPO}), we have
$F'(u)=\frac{1}{\sqrt{u(1-u)}}$ and thus $F(u)=2\arcsin(\sqrt{u})$ (up to a constant), matching Corollary~\ref{cor:grpo_surrogate}.}

\vspace{0.1in}
{Conversely, starting from a differentiable surrogate reward $F:[0,1]\rightarrow\R$, we can derive an RLVR policy-gradient update that (in the population limit) performs gradient ascent on $F(\rho_\theta)$.
At the population level, gradient ascent corresponds exactly to~\eqref{eq:SGA gen}. To obtain a practical finite-$N$ algorithm, \footnote{This empirical substitution represents one practical algorithmic 
choice. While highly sensible and consistent within our analysis (for 
instance, when applied to $F(u) = 2\arcsin(\sqrt{u})$ it exactly recovers 
the strong GRPO baseline), it is not the only option. Alternative gradient 
estimators, such as those leveraging variance reduction or debiasing 
techniques from the conditional stochastic optimization literature 
(see Sec.~\ref{sec:related_work}), could also be employed to optimize the 
same underlying surrogate reward.} we replace population quantities by empirical analogues:
(a) replace $F'(\rho_\theta)$ with $F'(\rhohat)$, where $\rhohat$ is the empirical (finite-$N$) estimate of $\rho_\theta$; and
(b) replace $\nabla_\theta\rho_\theta$ with its RLOO proxy $\Ghat_\rloo(\theta)=\rhohat(1-\rhohat)\cdot(\nablap-\nablam)$ (Eq.~\eqref{eq:rloo pop proxy}).}

{This yields the surrogate-driven finite-$N$ gradient estimator
\[
\Ghat_F(\theta;(x,a))
\;=\;
\omega(\rhohat)\cdot(\nablap-\nablam),
\qquad
\text{where}\quad
\omega(u):=F'(u)\cdot u\cdot (1-u).
\]
Equivalently, in the conventional advantage form,
\begin{align}\label{eq:forward engineeering grpo style}
\Ghat_F(\theta;(x,a))
=
\frac{1}{N}\sum_{i\in[N]}A_i^{F}\cdot \nabla_i,
\qquad
A_i^{F}:=F'(\rhohat)\cdot(r_i-\rhohat)=
\begin{cases}
F'(\rhohat)\cdot(1-\rhohat) & r_i=1,\\
-\,F'(\rhohat)\cdot\rhohat & r_i=0.
\end{cases}
\end{align}
Up to clipping and KL regularization,~\eqref{eq:forward engineeering grpo style} is a GRPO-style update in which the surrogate choice $F$ shapes (i.e., reweighs) the binary learning signal through the scalar factor $F'(\rhohat)$. It can be easily checked that \[A_i^F=F'(\rhohat)\sqrt{\rhohat(1-\rhohat)}\cdot A_i^\grpo\,,\] with $A_i^\grpo$ the vanilla GRPO advantage scores in Eq. \eqref{eq:GRPO advantage}.
\paragraph{Examples (forward-engineering).} Letting $F(u)=u$ recovers (up to a constant scaling $N/(N-1)$) the vanilla RLOO update. Letting $F(u)=2\arcsin(\sqrt{u})$ recovers the GRPO advantage shaping (Corollary~\ref{cor:grpo_surrogate}).
For Pass@K optimization, letting $F(u)=1-(1-u)^K$ yields $\rloo_K$, while letting $F(u)=\frac{2}{K}\arcsin\!\big(\sqrt{1-(1-u)^K}\big)$ yields $\grpot$ (Claim \ref{claim:forward_grpot}). As another example, letting $F(u)=q^{-1}u^q$ for any $q>0$ yields a one-parameter family of strictly increasing surrogate objectives with $F'(\rhohat)=\rhohat^{\,q-1}$, i.e., $\omega(\rhohat)=\rhohat^q(1-\rhohat)$: $q<1$ upweights low-$\rhohat$ examples, while $q>1$ upweights high-$\rhohat$ examples.}

{In all these cases $F$ is strictly increasing:  $F(\rho_\theta)$  is maximized at $1$ just like $\rho_\theta$, but induces possibly different optimization dynamics via shaped advantages.}
{In the following section, we provide additional examples of surrogate objectives that yield new RLVR algorithmic instances while possibly relaxing the requirement of the optimization objective being strictly increasing. We then conclude with practical remarks  in Sec.~\ref{sec:practical}, including the generally biased nature of the finite-$N$ update~\eqref{eq:forward engineeering grpo style}, and how one can incorporate clipping and KL regularization to obtain practical GRPO-like algorithms in offline settings (at the cost of sacrificing the exact surrogate-maximization guarantee). We also invite the community to explore alternative choices of $F$ and their potential practical benefits.}

\vspace{-0.1in}
\section{A Regularized Surrogate Rewards Perspective}\label{sec:reg}
% \vspace{-0.05in}

Both Pass@K routes examined in Secs. \ref{sec:Pass@K}  and \ref{sec:bydance} tilt the learning signal toward the hard
cases. 
\citet{passk_bydance} argue this is aligned with the exploit/explore roles of the two metrics. Maximizing the 0/1 reward emphasizes \emph{exploitation}: making individual responses more often correct, even on already-easy examples. In contrast, Pass@K encourages \emph{exploration}: for a given example, only one of $K$ responses must be correct, so once an example is ``solved'' in the Pass@K sense, further pushing its 0/1 reward is less valuable than allocating gradient signal to unsolved examples where more search could pay off. Ideally, we want a controlled balance: retain sufficient signal on solved examples while injecting additional search signal on unsolved ones. To achieve this, they propose mixing Pass@K-shaped and vanilla advantages.

Using reverse-engineering, we arrive at a complementary interpretation of their method as \emph{regularized surrogate reward maximization.} This perspective decomposes the RL objective into a data-fitting term (the transformation of the mean reward that the GRPO baseline optimizes) and a regularizer (an uncertainty term such as the reward's standard deviation). While the data-fitting term can be interpreted as exploitation and the regularizer as exploration, the regularization viewpoint itself provides a possibly more elementary justification for practical advantage-shaping heuristic-modifications of GRPO.

\vspace{-0.05in}
 \subsection{Combined 0/1 and Pass@K Advantage Shaping}
 \vspace{-0.05in}
Having observed that Pass@K optimization can \emph{aggressively} favor difficult examples (small $\rhohat$) while potentially zeroing out useful signal on easier ones (medium/large $\rhohat$),
\citet{passk_bydance} propose a simple blend between optimizing the $0/1$ reward and optimizing Pass@K. They combine the Pass@K advantage with the vanilla advantage using the empirical 0/1 reward $\rhohat$ as a mixing weight. In our notation:
\begin{align}\label{eq:combo bydance}
    \At_\combo^\pm
    \;=\; \rhohat\,\At_\pk^\pm \;+\; (1-\rhohat)\,A^\pm
    \;=\; \bigl(1-\rhohat\,+\rhohat\cdot\omt\bigr)\,A^\pm,
\end{align}
where we used Eq. \eqref{eq:wenlong vs vanilla} to write $\At_\pk^\pm=\omt\,A^\pm$.
The same idea can be instantiated for the \emph{direct} Pass@K optimization view presented in Sec. \ref{sec:Pass@K}. Since $A_\pk^\pm=\omkpm\,A^\pm$ by \eqref{eq:sadegh vs vanilla}, an analogous combination yields
\begin{align}\label{eq:combo direct}
    A_\combo^\pm
    \;=\; \rhohat\,A_\pk^\pm \;+\; (1-\rhohat)\,A^\pm
    \;=\; \bigl(1-\rhohat\,+\rhohat\cdot\omkpm\bigr)\,A^\pm.
\end{align}
Figure~\ref{fig:combo} visualizes the resulting \emph{effective} gradient weights (i.e., $\rhohat A^+_{\combo}$ for correct and $(1-\rhohat)\lvert A^-_{\combo}\rvert$ for wrong responses).  Observe
this blending avoids the overdamping of mid/large-$\rhohat$ training signal that can occur under pure Pass@K optimization. Interestingly, the two combinations become numerically very close as $K$ grows. Moreover, except when $K$ is small relative to $N$, both essentially collapse to the lower envelope $(1-\rhohat)A^\pm$. That is, they essentially no longer exploit an explicit Pass@K signal.

\begin{figure}[h] 
    \centering
    \includegraphics[width=\textwidth]{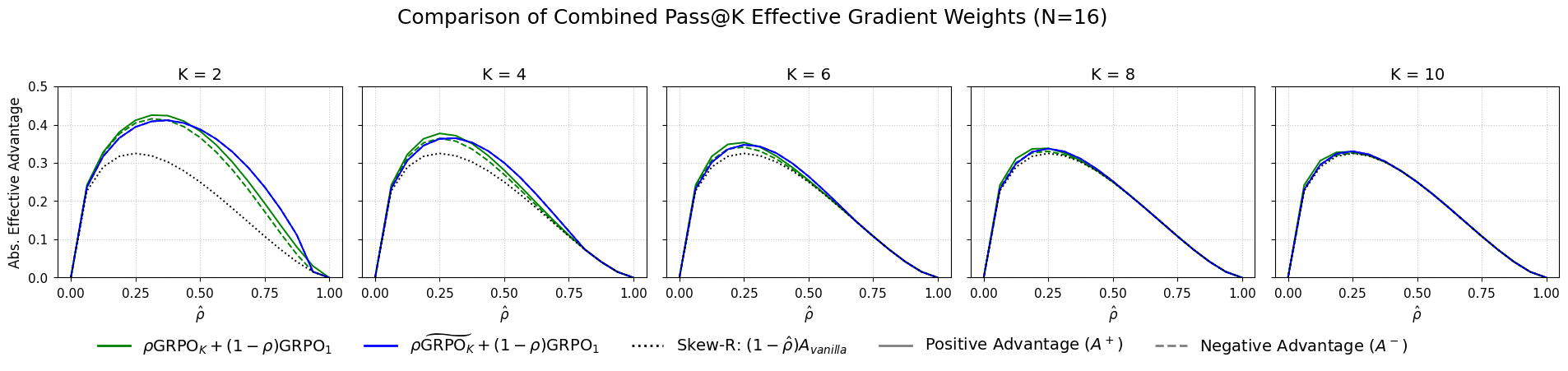}
    \captionsetup{width=\textwidth}
    \vspace{-0.3in}
\caption{%
Combined Pass@K \emph{effective} gradient scores for $N=16$ and various $K$.
Green uses Eq. \eqref{eq:combo direct};
Blue uses Eq. \eqref{eq:combo bydance};
Dotted gray  is the skew-R $(1-\rhohat)A^\pm$.
For $K=2,4$ the two combinations differ slightly at mid/high $\rhohat$ (blue is modestly larger);
for $K\ge 6$ they nearly coincide across $\rhohat$.
This is because, for most $\rhohat$, the Pass@K scalers vanish for $\rhohat<1-K/N$,
reducing both combinations to the same envelope $(1-\rhohat)\sqrt{\rhohat(1-\rhohat)}$.%
}
\label{fig:combo}
% \vspace{-0.1in}
\end{figure}

% \vspace{-0.05in}
\subsection{Reward Regularization Interpretation of Skew-R}\label{sec:skew-R}
% \vspace{-0.05in}
% \vspace{-0.in}
\emph{What reward does the combined 0/1 and Pass@K advantage shaping actually optimize?}
Because Sec. \ref{sec:bydance} connects advantage shaping to direct Pass@K optimization, one might guess it maximizes a convex combination of the $0/1$ and Pass@K rewards. The catch is that in \eqref{eq:combo bydance} the mixing weight $\rhohat$ is \emph{adaptive} to the instance ``difficulty'' (as measured by $\rhohat$), so the combination is \emph{not} a fixed convex blend at the objective level.

To simplify reverse engineering and obtain a more interpretable surrogate reward, we leverage the empirical observation from the previous subsection: except when $K$ is small relative to $N$, both combined schemes $\At^\pm_{\combo}$ and $A^\pm_{\combo}$ are numerically close in terms of effective weights to the simpler \emph{right-skew} shaping
\[
A^\pm_\skewr \;:=\; (1-\rhohat)\,A^\pm\,.
\]
\begin{claim}[Skew-R$=$ Regularized Surrogate Reward Maximization]\label{claim:skewR}
    %Recall $\rhohat:=\rhohat_\theta(x,a)$ denotes the per-example empirical 0/1 reward. 
    The skew-R  (pronounced “skewer”) method rescales the vanilla $0/1$ advantage by $(1-\rhohat)$, yielding gradient updates $
        \widehat G_\skewr(\theta;(x,a)) \;=\; (1-\rhohat)\cdot \widehat G_\grpo(\theta;(x,a))\,.$
    When $N\gg1$, this corresponds to maximization  of the per-example reward
    \begin{align}\label{eq:R skewr}
    % \mathcal{R}_\skewr(\theta;(x,a))
    % \;=\;
    \arcsin\!(\sqrt{\rho}\,)
    \;+\;
    \sqrt{\rho\,(1-\rho)}\,,
    \end{align}
    where $\rho:=\rho_\theta(x,a):=\E_{y\sim\pi_\theta(\cdot|x)}[r_{0/1}(y,a)]$ is the (population) 0/1 reward per example.
\end{claim}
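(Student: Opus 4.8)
The plan is to apply the forward-engineering recipe of Contribution~C-2 directly to the candidate surrogate $F(\rho)=\arcsin(\sqrt{\rho})+\sqrt{\rho(1-\rho)}$ and verify that it regenerates the skew-R update $\widehat G_\skewr=(1-\rhohat)\,\widehat G_\grpo$. Equivalently, one can run the recipe in reverse: read off the effective gradient weight of skew-R, solve for the implied $F'$, and integrate to recover $F$ up to an additive constant (immaterial for optimization). I would present the forward direction as the main argument and invoke the reverse integration only to confirm that $F$ is pinned down uniquely.

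First I would differentiate. A one-line computation gives
\begin{align}
F'(\rho)=\frac{1}{2\sqrt{\rho(1-\rho)}}+\frac{1-2\rho}{2\sqrt{\rho(1-\rho)}}=\frac{1-\rho}{\sqrt{\rho(1-\rho)}}=(1-\rho)\cdot\frac{1}{\sqrt{\rho(1-\rho)}}\,.\nn
\end{align}
The key structural fact, which makes the claim transparent, is that this is exactly $(1-\rho)$ times $\frac{1}{\sqrt{\rho(1-\rho)}}$, the derivative of the vanilla GRPO surrogate $2\arcsin(\sqrt{\rho})$ identified in \cref{cor:grpo_surrogate}. Since $F'(\rho)=\sqrt{(1-\rho)/\rho}\ge 0$ on $(0,1]$, the surrogate is monotone increasing and maximized at $\rho=1$, so it shares its optimum with the 0/1 reward, as required for ``maximization'' to be meaningful.

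Next I would carry out the two substitution steps of the recipe on the population gradient $\nabla_\theta F(\rho_\theta)=F'(\rho_\theta)\,\nabla_\theta\rho_\theta$: (a) replace $F'(\rho_\theta)$ by its empirical analogue $F'(\rhohat)=(1-\rhohat)/\sqrt{\rhohat(1-\rhohat)}$; and (b) replace $\nabla_\theta\rho_\theta$ by the RLOO proxy, which by Eq.~\eqref{eq:RLOO} is proportional to $\rhohat(1-\rhohat)\,[\nablap-\nablam]$. Multiplying the two factors yields
\begin{align}
(1-\rhohat)\frac{1}{\sqrt{\rhohat(1-\rhohat)}}\cdot\rhohat(1-\rhohat)\,[\nablap-\nablam]=(1-\rhohat)\,\sqrt{\rhohat(1-\rhohat)}\,[\nablap-\nablam]=(1-\rhohat)\,\widehat G_\grpo\,,\nn
\end{align}
using $\widehat G_\grpo=\sqrt{\rhohat(1-\rhohat)}[\nablap-\nablam]$ from Eq.~\eqref{eq:GRPO}. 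This is precisely $\widehat G_\skewr$, establishing the correspondence. The decomposition $F=\arcsin(\sqrt{\rho})+\sqrt{\rho(1-\rho)}$ then reads as the regularized form of Eq.~\eqref{eq:regpo}: the data-fitting term $\arcsin(\sqrt{\rho})$ is half the GRPO surrogate (an overall scaling absorbed into the learning rate), while the regularizer $\Omega(\rho)=\sqrt{\rho(1-\rho)}$ is the standard deviation of a Bernoulli$(\rho)$ reward (with $\lambda=1$), rewarding residual uncertainty on wrong responses.

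The computations are routine; the one point requiring care — and the main conceptual obstacle — is the sense in which ``corresponds to maximization'' holds. As in \cref{claim:forward_grpot} and \cref{cor:grpo_surrogate}, the match is between the skew-R update and the empirical surrogate obtained from $\nabla_\theta F$ after the RLOO-proxy substitution, valid in the $N\gg1$ regime; it is not an exact unbiased identity, since both the $F'(\rhohat)$ plug-in and products of unbiased estimators introduce bias. I would therefore state the result at the same population-gradient/large-$N$ level as the earlier claims, and explicitly flag that controlling these biases at finite $N$ lies outside the scope of the argument.
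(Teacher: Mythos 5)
Your proposal is correct: the differentiation $F'(\rho)=\frac{1}{2\sqrt{\rho(1-\rho)}}+\frac{1-2\rho}{2\sqrt{\rho(1-\rho)}}=\sqrt{\tfrac{1-\rho}{\rho}}$ is right, and plugging $F'(\rhohat)$ together with the RLOO proxy $\rhohat(1-\rhohat)[\nablap-\nablam]$ does reproduce $(1-\rhohat)\,\widehat G_\grpo=\widehat G_\skewr$ exactly. Your route is, however, the mirror image of the paper's. The paper argues in the \emph{reverse} direction: it takes the skew-R empirical update, passes to the population level using the conditional-gradient identity (Lemma~\ref{lem:conditional gradient identity}, which turns $[\nablap-\nablam]$ into $\nabla_\theta\rho/(\rho(1-\rho))$), obtains the population update $\sqrt{\tfrac{1-\rho}{\rho}}\,\nabla_\theta\rho$, then \emph{decomposes} this weight as $\frac{1}{2\sqrt{\rho(1-\rho)}}+\frac{1-2\rho}{2\sqrt{\rho(1-\rho)}}$ and \emph{integrates} each piece (the first via Corollary~\ref{cor:grpo_surrogate}, the second directly) to discover $\arcsin(\sqrt{\rho})+\sqrt{\rho(1-\rho)}$. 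You instead start from the stated surrogate, differentiate, and apply the recipe's two empirical substitutions — verification rather than discovery. The two arguments rest on the same algebraic identity read in opposite directions, so the difference is one of logical presentation: your forward check is more elementary (differentiation plus the C-2 recipe, no integration and no need to guess the two-term split), but it presupposes the answer; the paper's reverse-engineering shows how the surrogate is found and pins it down constructively. You cover this gap correctly by noting that integrating the implied $F'$ recovers $F$ uniquely up to an additive constant, and your closing caveat — that the correspondence is a population/large-$N$ statement, not an unbiasedness claim — matches the sense in which the paper (and Sec.~\ref{sec:biased}) interprets the result. One cosmetic caution: the claim's surrogate uses $\arcsin(\sqrt{\rho})$, i.e.\ \emph{half} of GRPO's $2\arcsin(\sqrt{\rho})$, precisely so that the data-fitting and regularizer terms carry the relative weights implied by the skew-R scaling; calling this "an overall scaling absorbed into the learning rate" is fine for the data-fitting term alone, but the \emph{relative} weight between the two terms is not free, as your own computation of $F'$ in fact demonstrates.
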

\noindent\textbf{Reward-regularization perspective.}~
Vanilla REINFORCE and RLOO maximize the expected 0/1 reward $\rho$ (pure exploitation). As we saw in Corollary~\ref{cor:grpo_surrogate}, vanilla GRPO maximizes a strictly monotone transform of the mean, $\arcsin(\sqrt{\rho})$, so it has the \emph{same maximizers} as $\rho$ and likewise corresponds to exploitation. In contrast, the skew-R reward can be interpreted as stochastic gradient ascent on a regularized surrogate reward that combines a data-fitting term and a regularizer. The data-fitting term incentivizes finding a policy that maximizes the reward over the training examples $(x,a)\in\Dc$. In contrast, the regularizer alone is maximized at $\rho=1/2$ where the binary correct and wrong policy outcomes are equally probable. An intuitive explanation of this term, aligning with the regularization effect in classical supervised learning, is that it incentivizes the model to not ``overfit'' on the training examples, maintaining diversity in wrong response that may explore different solution paths generalizing to unseen examples. Formalizing such a notion of overfitting and its connection to regularization could provide deeper insights into RLVR learning tradeoffs.
We note that the argmax of the combined reward in Eq.~\eqref{eq:R skewr} is still at $\rho=1$, reassuring here that we incentivize ultimately finding parameters that maximize the average reward. However, mirroring lessons from supervised fine-tuning, this might not always be the best strategy for generalization, and stronger regularization with an explicit parameter controlling the strength could be beneficial (see  next subsection).

\noindent\textbf{Implications for the ``Prioritized Sampling'' strategy of Kimi~1.5.}~
An interesting observation potentially supporting the practical value of skew-R: Kimi~1.5~\citep{team2025kimi} performs priority sampling in their RL experiments, reweighting each example by $1 - \rhohat$ to allow harder examples to appear more frequently. This corresponds to the advantage shaping of skew-R (Eq.~\eqref{eq:R skewr}) applied at the example level rather than within gradient computation. Thus, our interpretation as regularized reward maximization provides a direct justification for their empirically motivated practical choice.

\noindent\textbf{Consistency of interpretations.}~ 
In the large-sample limit $N \gg 1$, skew-R is equivalent to GRPO$_{K=2}$. In Appendix~\ref{sec:grpok_vs_skewr}, we elaborate on this comparison with emphasis on interpreting the reverse-engineered surrogate reward. We show that GRPO$_{K=2}$ effectively optimizes an incomplete-beta-function transformation of the Pass@2 ($\rho_2$) reward,
$
\Bc\!(1-(1-\rho_2)^{1/2}; 1/2, 3/2),
$
and that this objective is algebraically identical  to the skew-R surrogate in Eq.~\eqref{eq:R skewr}. Thus, in the population limit, both methods are in fact maximizing the same scalar objective.
This validates our surrogate-reward reverse-engineering framework, but it also highlights a subtle interpretive question about what constitutes “regularization.” The interpretation depends on which performance metric we take as the target:
(1) \emph{Pass@K target.} If Pass@K ($\rho_K$) is treated as the true performance metric, then GRPO$_K$ should be viewed as directly maximizing a smooth, monotone surrogate of $\rho_K$. In this view there is no extra penalty term: we are simply doing surrogate maximization of the evaluation metric itself. 
(2) \emph{$0/1$  target.} In contrast, if 0/1 reward is the ultimate objective (as with skew-R), the interpretation changes. The baseline GRPO already optimizes a surrogate $\arcsin(\sqrt{\rho})$, and the extra multiplicative $(1-\rhohat)$ reweighting is  interpreted as adding a \emph{regularizer term} to that baseline.

\begin{figure}[t]
  \centering
  \includegraphics[width=0.7\textwidth]{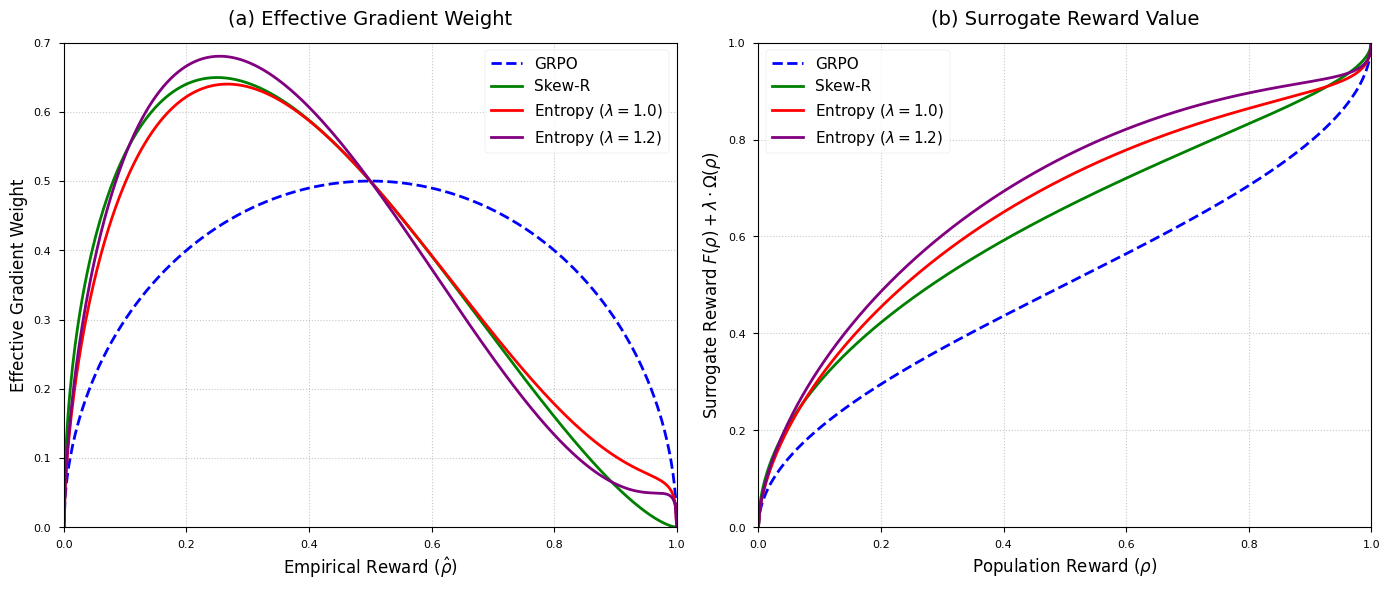}
  \captionsetup{width=\textwidth}
  % \vspace{-0.1in}
  \caption{
    Comparison of (regularized) surrogate objectives (GRPO, Skew-R, and Entropy-Augmented).
    \emph{(a) Effective Gradient Weight:} Empirical gradient's
    scaling factor as a function of the empirical reward $\hat{\rho}$.
    The vanilla GRPO (blue, dashed) is symmetric. Skew-R (green) and the
    Entropy-augmented methods (red, purple) are asymmetric, suppressing
    gradients for high $\hat{\rho}$. Skew-R gradients are multiplied here by $2$, so all four curves have the same area under the curve.
    \emph{(b) Surrogate Reward Value:} Corresponding
    surrogate reward functions,  
     normalized so that they evaluate to $1$ at $\rho=1$. {Takeaway: these regularized surrogates prioritize improving low/mid-reward prompts. The regularization strength (larger $\lambda$) controls how strongly updates are shifted away from high-$\hat\rho$ prompts, while the regularizer type controls the \emph{shape} of this reweighting (e.g., entropy retains a small but nonzero weight even at high $\hat\rho$).}
  }
  \vspace{-0.1in}
  \label{fig:surrogate-comparison}
\end{figure}

\vspace{-0.1in}
\subsection{An Example of Reward-Level Regularization}\label{sec:regularized}

To further illustrate the use of reward regularization, we show how to optimize a surrogate reward that combines the data-fitting term with an entropy regularizer $H(\rho)= -\rho\log(\rho) - (1-\rho)\log(1-\rho)$ of the binary 0/1 reward. Intuitively, similar to the standard-deviation regularization of skew-R, $H(\rho)$ favors examples where the model is still ``undecided,'' redistributing gradient effort toward those rather than repeatedly polishing ones it already gets right. The overall surrogate reward is
$
2\,{\arcsin(\sqrt{\rho})} + \;\;\lambda \cdot {H(\rho)}\,.\nn
$
Applying the forward-engineering recipe
of Sec. \ref{sec:equiv}
yields a tunable advantage-shaping rule that scales the vanilla GRPO gradient as follows:
\begin{align}\label{eq:entropy}
\widehat{G}_{\text{entropy}} = {\Big[ 1 - \lambda \cdot \sqrt{\rhohat(1-\rhohat)} \log\left(\frac{\rhohat}{1-\rhohat}\right) \Big]} \cdot \,\widehat{G}_\grpo\,.
\end{align}
This mirrors the structure of skew-R, but with a different weighting function. Whereas skew-R uses $(1-\hat\rho)$ and therefore drives the weight for already-solved examples ($\hat\rho \approx 1$) essentially to zero, the entropy-based weight retains a significantly reduced, but nontrivial, weight even for easy prompts, while still sharply boosting harder examples. See Fig. \ref{fig:surrogate-comparison} for illustration. 
Note also that the entropy-augmented objective \emph{need not} be globally maximized at $\rho=1$ for all $\lambda$: For $\lambda \lesssim 1.5$, its maximizer remains at $\rho=1$, so we are still ultimately pushing toward perfect success. For larger regularization strengths, the entropy term is strong enough that the overall maximizer shifts to some $\rho < 1$,  providing a mechanism to penalize perfect fitting of the training data, a stronger form of regularization.

\section{Practical Considerations}\label{sec:practical}

\paragraph{Biased vs Unbiased Scalings.} 
In Sec. \ref{sec:Pass@K} we derived Pass@K advantages by using a leave-one-out construction to obtain unbiased estimates of the two components in \eqref{eq:G Pass@K general}: $\mathrm{Fail}@{(K-1)}$ and $G_{0/1}(\theta;(x,a))$, resulting in the unbiased empirical gradient $\widehat G_{\rloo_K}$ in Eq. \eqref{eq:rloo pass@K}. In contrast, \citet{sadegh} apply a \emph{biased} empirical gradient with advantage scaling $(1-\widehat\rho)^{K-1}\cdot A^\pm\,$. This remains biased even if  $A^\pm$ correspond to an unbiased method, such as RLOO. Reverse-engineering their biased estimator shows that it maximizes the same surrogate reward as our unbiased RLOO$_K$. In fact, the advantage scores resulting  from our forward-engineering of surrogate rewards  are all biased. Thus, while the surrogate reward provides a unifying perspective to advantage scores, it does not differentiate between biased vs unbiased implementations. While unbiasedness is intuitively a desirable property, see App. \ref{sec:biased_app} for a discussion on regimes where biased variants can be attractive.  {Finally, we remark that when the number of responses \(N\) per prompt is small, plug-in estimates such as \(\hat\rho\) (or \(\hat\rho_K\)) can be noisy, and the resulting reweightings may deviate from their large-sample forms assumed by our surrogate-reward interpretations. Characterizing these finite-\(N\) effects is an important direction for future work; we view the surrogate-reward lens as still useful here as an organizing principle that identifies the population objective being approximated.}

\noindent\textbf{Normalization Factors.}~
The main difference of GRPO from classical RLOO is, up to the clipping, the normalization with respect to the reward's standard deviation. Cor.~\ref{cor:grpo_surrogate} provides an alternative interpretation of this scaling: at the level of the training objective, it maximizes a transformation of the 0/1 reward that stabilizes the variance of its empirical estimator. 
%This interpretation is consistent for $\grpot$, which as we discussed in Sec.~\ref{sec:bydance} was originally designed through advantage shaping in a batched setting where Pass@K is evaluated on $N/K$ batches of the observed responses. 
More broadly, our forward-engineering of reward surrogates $F(\rho)$ uses RLOO empirical gradients as a robust estimate of the population reward gradient $\nabla_\theta\rho$ and scales them by $F'(\rho)$ that generalizes the specific GRPO normalization via standard deviation.
On the empirical side, there is ongoing debate about whether such normalizations are necessary for good performance. For example, in support of earlier works \citep{ahmadian2024back}, a recent study by \citet{khatri2025art} finds that RLOO is competitive to GRPO, with zero-mean advantages being the main driver behind its success~\citep{xiong2025minimalist}. This raises an interesting question: Could we theoretically characterize how properties of a surrogate $F$  relate to favorable optimization dynamics? Such an analysis might shed light on if, when and why these normalizations matter in practice.

\noindent\textbf{Clipping.}~
Our analysis omits the clip operation of GRPO \citep{GRPO_original}, which is itself inherited from PPO \citep{PPO}. This is accurate when GRPO operates in full online mode (single update per model generation), as originally done by \citet{GRPO_original}. However, if multiple updates are performed per generation, the GRPO gradient expression in Eq.~\eqref{eq:GRPO} that our analysis uses is only a proxy. {That said, current systems are often optimized to be on-policy~\citep{liuspeed,qi2025defeating}}. Either way, all advantage-shaping methods studied here (e.g., GRPO$_K$, skew-R, entropy-augmented) can be 
empirically applied in practical, multi-epoch training regimes 
by simply substituting the respective advantage scores into the original GRPO formulation with the clip operation and/or KL regularization (or follow-up variants, e.g., \citep{DAPO,GSPO}). {Specifically, using a surrogate $F$ is essentially a \emph{one-line change} in existing GRPO code: following~\eqref{eq:forward engineeering grpo style}, set $A_i \leftarrow F'(\hat\rho)\sqrt{\hat\rho(1-\hat\rho)}\,A_i^{\grpo}$ (with $\hat\rho$ the batch success rate) before clipping, then apply the usual clipping/KL.}

\noindent\textbf{Granular Advantage Shaping.}~
The methods we study shape the advantage at the example level. This is the case for all methods that result from forward-engineering  a (regularized) reward surrogate (as in Eq.~\eqref{eq:regpo}): they scale vanilla RLOO by a factor $F'(\rhohat)+\lambda\cdot \Omega'(\rhohat)$ that depends only on the per-example empirical 0/1 reward $\rhohat$. The only exception is the Pass@K methods designed in Sec.~\ref{sec:Pass@K}, which introduce asymmetric weights for correct versus wrong responses. Here, advantage shaping is done at the response level. \footnote{In this specific case, the weights  of correct ($\omkp$) and wrong gradients ($\omkm$) have the same population limit. The reverse-engineering approach of Claims \ref{claim:bydance_objective}, \ref{claim:skewR} can still be applied even if these limits are themselves asymmetric (see Appendix \ref{sec:KW}).} Finer-grained shaping is also possible. For example,
\citet{thr,cheng2025reasoning,cui2025entropy} investigate token-level advantage shaping and would be interesting to reverse-engineer those as regularized surrogate rewards, albeit in the form of finer-grained regularization at the policy/model-representation level.

{
\paragraph{Empirical evaluation and surrogate selection.}
Beyond providing a unifying lens for interpreting existing RLVR/GRPO-style policy-gradient methods, the established bi-directional relation ``$\text{advantage shapes} \leftrightarrow\text{weighted gradients} \leftrightarrow \text{surrogate reward}$''  offers an algorithmic opportunity: a \emph{forward-engineering recipe} for designing advantage-shaping rules. As noted above, the updates derived from this recipe are \emph{plug-and-play} in existing GRPO implementations. In Secs.~\ref{sec:uni}--\ref{sec:reg}, we introduce concrete surrogates as starting points, and in App.~\ref{sec:exp} we provide proof-of-concept experiments comparing the performance of variants like Skew-R, GRPO$_K$, and entropy-regularized methods against vanilla GRPO. For designing future algorithms, the most convenient objects to work with  are the surrogate reward $F(\rho_\theta)$ or the associated weighting functions $\omega_\pm(\rho_\theta)$, as these directly specify how the update reweights correct vs.\ incorrect responses (see Sec.~\ref{sec:equiv}). Conceptually, the equivalence ``$\text{weighted gradients} \leftrightarrow \text{surrogate reward}$'' mirrors the duality in supervised learning between modifying gradients and modifying loss functions. This suggests that well-studied robust or imbalance-aware loss-design principles from supervised learning can serve as starting points for designing corresponding RLVR updates. Overall, the recipe is intentionally general; a systematic empirical study of this design space that involves identifying robust families across tasks/compute regimes and characterizing trade-offs is an important next step, which we leave to future work. Combined with the theoretical unification through surrogate rewards, such a study may also yield concrete guidelines for when properties of the data distribution, model capacity, or task structure favor one algorithm over another.
}

\section{Related Work}\label{sec:related_work}
Our work is motivated by the rapid development over the past year of variants of the GRPO algorithm, from vanilla binary reward maximization to Pass@K optimization. These developments typically result from selecting appropriate advantage scores, as detailed in the preceding sections. Our work provides a complementary perspective via weighted gradients and surrogate rewards, unifying existing algorithms and offering an alternative recipe for engineering new ones that is more akin to loss design and gradient reweighting in supervised learning (see discussion in Sec. \ref{sec:conc}).

Contemporaneous work by \citet{davis2025objective} independently arrives at the same reverse-engineering of algorithms such as GRPO  as surrogate-reward maximization. Since we were motivated by the suite of Pass@K-tailored algorithms, the set of algorithms we study here is broader. Additionally, beyond reverse-engineering, we put forth a specific forward-engineering recipe that forms the basis for using the framework to design new RLVR algorithms. That said, the expositions, even at the level of reverse-engineering, and the motivations between our paper and theirs are qualitatively different; we invite the interested reader to consult their paper as well for a complementary perspective on this topic.

The surrogate reward maximization objective in Eq.~\eqref{eq:gpo} falls under the umbrella of what the optimization literature calls \emph{conditional stochastic optimization} \citep{hu2020sample}, which has been studied algorithmically in a series of works, e.g.,  \citep{hu2020biased,wang2022finite,he2023debiasing}. The algorithm resulting from our forward-engineering is essentially the same as the biased stochastic gradient ascent algorithm proposed and studied in \cite{hu2020biased}, but not identical: here we use the RLOO empirical estimator for $\nabla_\theta\rho_\theta$ rather than the REINFORCE estimator (no baseline), which would make the two identical. This connection of RLVR algorithms to the conditional stochastic optimization literature opens the possibility of applying techniques from it (e.g., variance reduction and debiasing) directly to RLVR implementations, and at the same time further motivates the study of the conditional stochastic optimization framework.

% \vspace{-0.1in}
\section{Conclusion}\label{sec:conc}
% \vspace{-0.1in}

Motivated by an effort to reconcile recent Pass@K policy gradient methods, we show that advantage shaping is (asymptotically) equivalent to RLOO-style optimization of surrogate rewards $F(\rho)$. This equivalence extends beyond Pass@K: for instance, ``hard-example upweighting'' heuristics correspond to adding reward-level regularization $\Omega(\rho)$ to the training objective. While $F(\rho)$ is maximized at $\rho=1$, acting as a data-fitting term, the regularizer $\Omega(\rho)$ provides an opposing force that can be interpreted as preventing overfitting, particularly to easy training examples.

Advantage shaping and surrogate reward design are two sides of the same coin. This equivalence mirrors classical supervised learning, where modifying the training objective is analogous to directly modifying the gradient. Both design approaches have proven effective, with a strong parallel in supervised classification for imbalanced data, e.g. \citep{lin2017focal,menon_LA_loss,ye2021procrustean,kini2021label}. At a conceptual level, understanding this equivalence is instructive: irrespective of the design choice—whether starting from advantage shapes or surrogate rewards—analyzing the counterpart can provide valuable insight. Looking forward, while direct parallels to supervised learning should be drawn cautiously, the (regularized) surrogate reward lens may facilitate (i) a theoretical bridge between iterative policy-gradient optimization and the population performance metrics these algorithms ultimately aim to optimize, {(ii) designing new practically relevant advantage shapes, with surrogate losses and reweighting schemes from supervised learning as potential starting points, and (iii) gaining insight into what properties (e.g., of the data distribution or model) determine when existing or new algorithms perform well.}

\section*{Acknowledgments}
This work was supported by NSERC Discovery Grant No. 2021-03677, Alliance Grant ALLRP 581098-22, a gift from Google.org, and a UBC 4YF Departmental Fellowship. Additional computational resources were provided by the UBC Advanced Research Computing (ARC) Sockeye cluster. The authors thank the anonymous reviewers for the useful feedback that led to improving the manuscript. CT especially thanks Reinhard Heckel (TU Munich) and Mahdi Soltanolkotabi (USC) for inspiring and helpful discussions and ideas on the topic.

% \newpage
\printbibliography
\newpage
\appendix

\section*{Summary of Notations}

\paragraph{Notations.} For the reader's convenience, we list the core notation used throughout the paper.
\begin{description}[leftmargin=1.8cm,style=sameline,itemsep=1pt,topsep=4pt]
    \item[$(x,a)$] A problem-prompt ($x$) and reference-answer ($a$) pair.
    \item[$\pi_\theta(\cdot|x)$] The policy (LLM) parameterized by $\theta$.
    \item[$N$] Number of IID responses generated per example.
    \item[$y_i$] The $i$-th response, $y_i \sim \pi_\theta(\cdot|x)$.
    \item[$r_i$] The binary 0/1 reward $r_{0/1}(y_i, a)$.
    \item[$\rho=\rho_\theta$] The (population) expected 0/1 reward $\E_{y\sim \picond}[r_{0/1}(y,a)]$.
    \item[$\rhohat$] The empirical 0/1 reward $\frac{1}{N}\sum_{i=1}^N r_i$.
    \item[$N^+, N^-$] Number of correct ($r_i=1$) and wrong ($r_i=0$) responses.
    \item[$\rho_K=\rhoktheta$] The (population) expected Pass@K reward.
    \item[$\rhohatk$] The unbiased empirical estimator of $\rhoktheta$.
    \item[$\nabla_i$] The log-probability gradient $\nabla_\theta \log \pi_\theta(y_i|x)$.
    \item[$\nablap$] Average empirical gradient over correct responses: $\frac{1}{N^+}\sum_{i: r_i=1} \nabla_i$.
    \item[$\nablam$] Average empirical gradient over wrong responses: $\frac{1}{N^-}\sum_{i: r_i=0} \nabla_i$.
\end{description}

\section{ GRPO's PPO-Style vs Surrogate Reward Objectives}\label{sec:grpo details}

In the original paper, \citet{GRPO_original} present GRPO as optimizing a PPO-style  objective \citep{PPO}. This section reconciles that formulation with the surrogate reward objective derived in our Corollary \ref{cor:grpo_surrogate}.

\paragraph{PPO-style objective.} Following common practice \citep{DAPO,DrGRPO,rlzoo,deng2025effect}, we omit the KL regularization term\footnote{
We refer the reader to \citet{grpo_fixed_point} for a complementary analysis of GRPO's optimization dynamics focusing on the trajectory of the probability of success under explicit KL regularization towards a reference policy. Instead here, we focus solely on the implicit objective optimized by the gradients corresponding to the ``advantage term'' of the GRPO objective.
} and the $1/|y_i|$ length normalization (which is found responsible for length-bias) from the original GRPO objective of \citet{GRPO_original}, yielding:
\begin{align}\label{eq:grpo ppo style}
 \E_{\{y_i\}_{i\in[N]} \simiid \pi_{\theta_\text{old}}(\cdot|x) } 
\Bigg[
%\frac{1}{\sum_{i=1}^N |y_i|}
\frac{1}{N}\sum_{i=1}^N 
\sum_{t = 1}^{|y_i|}
\min(\policyratio_{i,t} A_{i}, \clip{\policyratio_{i,t}}{1-\epsilon}{1+\epsilon}A_{i}) \Bigg]\,,
\end{align}
where $\clip{\cdot}{a}{b}$ is the clip function at levels $a$ and $b$, and $\policyratio_{it}=\pi_\theta(y_{it}|x,y_{i,<t})/\pi_{\theta_\text{old}}(y_{it}|x,y_{i,<t})$ is the policy ratio between the current model $\theta$ (to be updated) and the model $\theta_\text{old}$ that generates the responses. 
For clarity, further omit the clip operation, as done for example in \citet[App. A.1.6]{GRPO_original}. Then the above objective simplifies to 
\[
 \E_{y_i\sim \pi_{\theta_\text{old}}(\cdot|x) } 
\Bigg[
\frac{1}{N}\sum_{i=1}^N 
\sum_{t = 1}^{|y_i|}
\policyratio_{i,t}\cdot A_{i}\Bigg]\,.
\]
Note that the expectation here is defined with respect to samples from the \emph{old} policy $\pi_{\theta_\old}$, following the TRPO and PPO frameworks \citep{trpo,PPO}.  Thus, when taking gradients, $\pi_{\theta_\text{old}}$ is held fixed; only the numerator $\pi_\theta$ depends on $\theta$.

 Taking the gradient with respect to $\theta$ and using the log-derivative trick, the gradient becomes
\begin{align}\label{eq:"objective" GRPO}
 \E_{y_i\sim \pi_{\theta_\text{old}}(\cdot|x) } 
\Bigg[
\frac{1}{N}\sum_{i=1}^N 
\sum_{t = 1}^{|y_i|}
\policyratio_{i,t}\cdot A_{i}\cdot\nabla_\theta\log\picondy{y_{it}|x,y_{i,<t}}\Bigg]\,.
\end{align}
 The fully online setting (corresponding to $\mu=1$ iterations in \citep[Alg. 1]{GRPO_original}) implies $\pi_\theta=\pi_{\theta_\old}$. Then, by $\policyratio_{i,t}=1$, this simplifies to 
 \begin{align}\label{eq:grpo gradient appx original}
\E_{y_i\sim \pi_\theta(\cdot|x) } 
\Big[
\frac{1}{N}\sum_{i=1}^N A_i\cdot \nabla_\theta\log\picondy{y_i} \Big]\,.
\end{align}
The empirical version of this, $\frac{1}{N}\sum_{i=1}^N A_i\cdot \nabla_\theta\log\picondy{y_i}$, is precisely the GRPO gradient we analyze in Sec. \ref{sec:0/1} and Eq. \eqref{eq:GRPO}.

\paragraph{Difference to our surrogate reward.} A natural question arises: \emph{How does the PPO-style objective in Eq.~\eqref{eq:"objective" GRPO} relate to the surrogate reward $2\arcsin(\sqrt{\rho_\theta})$ that Corollary~\ref{cor:grpo_surrogate} identifies as the function optimized by the GRPO gradient (Eq. \eqref{eq:grpo gradient appx original})?}

The resolution lies in recognizing that these are fundamentally different types of objectives:
\begin{itemize}
    \item As mentioned, the PPO-style objective (Eq.~\eqref{eq:"objective" GRPO}) is defined over a \emph{fixed} distribution $\pi_{\theta_\old}$.
    \item Our surrogate reward identifies the \emph{on-policy} population objective that the gradient in Eq.~\eqref{eq:grpo gradient appx original} ascends.
\end{itemize}
To illustrate this distinction, consider what happens if we incorrectly interpret the PPO objective as fully on-policy by substituting $\pi_{\theta_\old} = \pi_\theta$ everywhere (both in the integrand and the sampling distribution) in Eq. \eqref{eq:"objective" GRPO}. This would yield:
\begin{align}
\E_{y_i\sim \pi_\theta(\cdot|x)}
\Big[\frac{1}{N}\sum_{i=1}^N A_i\Big]\,.
\end{align}
However, recall that $A_i = {(r_i - \hat{\rho})}/{\sqrt{\hat{\rho}(1-\hat{\rho})}}$: by construction, the advantages are zero-centered: $\sum_{i\in[N]} A_i = 0$ for any batch. Therefore, this naive on-policy interpretation yields a trivial objective that is identically zero; clearly not what GRPO optimizes.

As established in Corollary~\ref{cor:grpo_surrogate}, the actual population objective that the on-policy GRPO gradient (Eq.~\eqref{eq:grpo gradient appx original}) ascends is the surrogate reward:
$2\arcsin(\sqrt{\rho_\theta})$.

\paragraph{Perspective Beyond the Fully Online Setting.} This surrogate reward perspective provides a principled way to interpret and design advantage functions. Our analysis shows how algorithms like GRPO and $\grpot$ correspond to specific surrogate rewards in the on-policy setting. This link is constructive: new advantage-shaping methods, derived from new surrogate rewards (as in Sec. \ref{sec:Pass@K} and  \ref{sec:reg}), are not limited to the fully online setting. Their resulting advantage scores, $A_i$, can be seamlessly embedded within the PPO-style objective (Eq. \eqref{eq:grpo ppo style}), allowing them to be used in practical, multi-epoch training regimes.

\section{Missing Proofs}

\subsection{Auxiliary Technical Results}\label{sec:aux}
\begin{lemma}[Relating $\omkp$, $\omkm$, and  $\rhohatk$]\label{lem:algebra}
 With
\[
\rhohatk \;=\; 1-\frac{\binom{N^-}{K}}{\binom{N}{K}},\qquad
\omkp \;=\; \frac{\binom{N^-}{K-1}}{\binom{N-1}{K-1}},\qquad
\omkm \;=\; \frac{\binom{N^- - 1}{K-1}}{\binom{N-1}{K-1}},
\]
and the conventions $\binom{N}{k}=0$ when $N<k$ and $\binom{N}{0}=1$, we have for $\rhohat=\frac{N^+}{N}$:
\[
{\quad
1-\rhohatk \;=\; (1-\rhohat)\cdot\,\omkm
\;=\; (1-\rhohat-\frac{K-1}{N})\cdot\,\omkp \quad }.
\]
\end{lemma}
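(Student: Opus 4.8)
The plan is to establish both equalities by direct manipulation of binomial coefficients, relying only on the elementary \emph{absorption identities} $\binom{n}{k}=\tfrac{n}{k}\binom{n-1}{k-1}$ (reducing the top index) and $\binom{n}{k}=\tfrac{n-k+1}{k}\binom{n}{k-1}$ (reducing the bottom index). First I would record the common starting point straight from the definition of $\rhohatk$: since $\rhohatk = 1-\binom{N^-}{K}\big/\binom{N}{K}$, we have $1-\rhohatk = \binom{N^-}{K}\big/\binom{N}{K}$, so both claimed equalities amount to rewriting this single ratio in two different ways. I would also note $1-\rhohat = N^-/N$ throughout.

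For the first equality, I would apply the top-index absorption identity to \emph{both} numerator and denominator: $\binom{N^-}{K}=\tfrac{N^-}{K}\binom{N^--1}{K-1}$ and $\binom{N}{K}=\tfrac{N}{K}\binom{N-1}{K-1}$. The common factor $1/K$ cancels, leaving $\binom{N^-}{K}\big/\binom{N}{K} = \tfrac{N^-}{N}\cdot \binom{N^--1}{K-1}\big/\binom{N-1}{K-1} = (1-\rhohat)\,\omkm$, by the definition of $\omkm$. For the second equality I would instead reduce the bottom index in the numerator, $\binom{N^-}{K}=\tfrac{N^--K+1}{K}\binom{N^-}{K-1}$, while reducing the denominator by the top index as before, $\binom{N}{K}=\tfrac{N}{K}\binom{N-1}{K-1}$. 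Again $1/K$ cancels, yielding $\binom{N^-}{K}\big/\binom{N}{K} = \tfrac{N^--K+1}{N}\cdot \binom{N^-}{K-1}\big/\binom{N-1}{K-1} = \big(1-\rhohat-\tfrac{K-1}{N}\big)\,\omkp$, using $\tfrac{N^--K+1}{N}=\tfrac{N^-}{N}-\tfrac{K-1}{N}$ and the definition of $\omkp$.

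The only genuine subtlety, and the step I expect to require the most care, is checking consistency with the stated conventions ($\binom{N}{k}=0$ for $N<k$, $\binom{N}{0}=1$) in the degenerate regime $N^-<K$, where the combinatorial Pass@K weight saturates. There $\binom{N^-}{K}=0$ forces the left-hand side $1-\rhohatk=0$, and I would verify both right-hand sides vanish as well. On the $\omkm$ side, either $N^-=0$ (so the prefactor $1-\rhohat$ is $0$) or $1\le N^-<K$ (so $\binom{N^--1}{K-1}=0$). On the $\omkp$ side, either $N^-<K-1$ (so $\binom{N^-}{K-1}=0$) or $N^-=K-1$ (so the prefactor $1-\rhohat-\tfrac{K-1}{N}=\tfrac{N^--(K-1)}{N}=0$); the borderline case $N^-=K-1$ is the instructive one, where $\omkp=1/\binom{N-1}{K-1}\neq 0$ but the prefactor is exactly zero. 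Hence both identities hold as equalities (both sides zero) on this boundary, and away from it the absorption identities above apply verbatim, completing the proof.
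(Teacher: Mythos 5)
Your proof is correct and follows essentially the same route as the paper's: both establish the first identity by applying $\binom{n}{k}=\tfrac{n}{k}\binom{n-1}{k-1}$ to numerator and denominator of $\binom{N^-}{K}\big/\binom{N}{K}$, and the second by one further absorption identity (you reduce the bottom index of $\binom{N^-}{K}$ directly, whereas the paper equivalently rewrites $\omkm=\tfrac{N^--(K-1)}{N^-}\,\omkp$ and substitutes). Your explicit case-check of the degenerate regime $N^-<K$, including the borderline $N^-=K-1$ where the prefactor rather than $\omkp$ vanishes, is more careful than the paper's one-line remark that the conventions cover these cases, but it is a refinement of the same argument rather than a different one.
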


\begin{proof}
Use the standard identities
$\binom{N}{k}=\frac{N}{k}\binom{N-1}{k-1}$ and
$\binom{N}{K}=\frac{N}{K}\binom{N-1}{K-1}$.
Then
\[
\frac{\binom{N^-}{K}}{\binom{N}{K}}
=\frac{\frac{N^-}{K}\binom{N^- - 1}{K-1}}{\frac{N}{K}\binom{N-1}{K-1}}
=\frac{N^-}{N}\cdot \frac{\binom{N^- - 1}{K-1}}{\binom{N-1}{K-1}}
=\frac{N^-}{N}\,\omkm,
\]
which implies $\rhohatk = 1-\frac{N^-}{N}\,\omkm$.

Alternatively, using $\binom{N-1}{k}=\frac{N-k}{N}\binom{N}{k}$ with $N=N^-$ and $k=K-1$,
\[
\omkm
=\frac{\binom{N^- - 1}{K-1}}{\binom{N-1}{K-1}}
=\frac{N^--(K-1)}{N^-}\cdot \frac{\binom{N^-}{K-1}}{\binom{N-1}{K-1}}
=\frac{N^--(K-1)}{N^-}\,\omkp.
\]
Substitute this into the previous display to get
\[
\frac{\binom{N^-}{K}}{\binom{N}{K}}
=\frac{N^-}{N}\cdot \frac{N^--(K-1)}{N^-}\,\omkp
=\frac{N^--(K-1)}{N}\,\omkp,
\]
hence $\rhohatk = 1-\frac{N^--(K-1)}{N}\,\omkp$. The boundary conventions cover cases $K=1$ (both weights $=1$) and $N^-<K$ (ratio $=0$). 
\end{proof}

\begin{lemma}[Conditional-gradient identity]\label{lem:conditional gradient identity}
Let $r:=r_{0/1}(y,a)$  and $\rho:=\rho_\theta(x,a):=\E_{y\sim\pi_\theta(\cdot|x)}[r_{0/1}(y,a)]$. With expectations over $y\sim\picond$, define
\[
\Delta \;:=\; \E\!\left[\nabla_\theta\log\pi_\theta(y|x)\mid r=1\right]
      \;-\; \E\!\left[\nabla_\theta\log\pi_\theta(y|x)\mid r=0\right].
\]
Then
\[
\Delta \;=\; \frac{1}{\rho(1-\rho)}\,\nabla_\theta \rho.
\]
\end{lemma}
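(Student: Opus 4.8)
The plan is to reduce everything to two conditional score expectations and then exploit two standard identities: the log-derivative (score) trick and the fact that the score function has zero mean. Write $g_+:=\E[\nabla_\theta\log\picondy{y}\mid r=1]$ and $g_-:=\E[\nabla_\theta\log\picondy{y}\mid r=0]$, so that by definition $\Delta = g_+ - g_-$. The goal becomes determining $g_+$ and $g_-$ individually and subtracting.

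First I would apply the log-derivative trick exactly as in Eq.~\eqref{eq:ascent}. Since $r=r_{0/1}(y,a)\in\{0,1\}$ is the indicator of correctness,
\[
\nabla_\theta\rho = \E_{y\sim\picond}\big[r\cdot\nabla_\theta\log\picondy{y}\big] = \rho\cdot g_+,
\]
where the last equality uses $\E[r\cdot Z]=\Pr(r=1)\,\E[Z\mid r=1]=\rho\,g_+$ for any random vector $Z$. This already pins down one conditional mean: $g_+ = \tfrac{1}{\rho}\nabla_\theta\rho$.

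Next I would invoke the zero-mean score identity $\E_{y\sim\picond}[\nabla_\theta\log\picondy{y}]=0$, which follows from $\E[\nabla_\theta\log\pi_\theta]=\int\nabla_\theta\picondy{y}\,\mathrm{d}y=\nabla_\theta\int\picondy{y}\,\mathrm{d}y=\nabla_\theta 1 = 0$. Splitting this unconditional expectation over the events $\{r=1\}$ and $\{r=0\}$ by the law of total expectation gives the second linear equation $0=\rho\,g_+ + (1-\rho)\,g_-$, hence $g_- = -\tfrac{\rho}{1-\rho}g_+ = -\tfrac{1}{1-\rho}\nabla_\theta\rho$. Subtracting then yields $\Delta = g_+ - g_- = \big(\tfrac{1}{\rho}+\tfrac{1}{1-\rho}\big)\nabla_\theta\rho = \tfrac{1}{\rho(1-\rho)}\nabla_\theta\rho$, which is the claimed identity.

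I do not expect a genuine obstacle here: the computation is routine once both identities are in place. The only point worth stating carefully is the conditioning step $\E[rZ]=\rho\,\E[Z\mid r=1]$, which relies on $r$ being exactly the $0/1$ indicator so that weighting by $r$ is the same as restricting to the event $\{r=1\}$. The one conceptual ingredient to flag is that the zero-mean-score identity is what supplies the \emph{second} equation needed to recover both conditional means $g_+,g_-$ from the single population quantity $\nabla_\theta\rho$.
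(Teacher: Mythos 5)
Your proof is correct and follows essentially the same route as the paper's: the log-derivative trick yields $g_+ = \nabla_\theta\rho/\rho$, the zero-mean score identity yields $g_- = -\nabla_\theta\rho/(1-\rho)$, and subtraction gives the claim. The only cosmetic difference is that you extract $g_-$ from the total-expectation decomposition $0=\rho\,g_+ + (1-\rho)\,g_-$, whereas the paper writes $g_-$ directly as $\E[(1-r)\,\nabla_\theta\log\pi_\theta(y|x)]/(1-\rho)$; these are the same computation.
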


\begin{proof}
By conditioning and the log-derivative trick,
\[
\E\!\left[\nabla_\theta\log\pi_\theta(y|x)\mid r=1\right]
=\frac{\E[r\,\nabla_\theta\log\pi_\theta(y|x)]}{\rho}
=\frac{\nabla_\theta \rho}{\rho}.
\]
Using $\E[\nabla_\theta\log\pi_\theta(y|x)]=0$,
\[
\E\!\left[\nabla_\theta\log\pi_\theta(y|x)\mid r=0\right]
=\frac{\E[(1-r)\,\nabla_\theta\log\pi_\theta(y|x)]}{1-\rho}
=-\,\frac{\nabla_\theta \rho}{1-\rho}.
\]
Subtracting gives
\(
\Delta=\big(\tfrac{1}{\rho}+\tfrac{1}{1-\rho}\big)\nabla_\theta\rho
=\frac{1}{\rho(1-\rho)}\,\nabla_\theta\rho.
\)
\end{proof}

\subsection{Proof of Equation \eqref{eq:om}}\label{sec:proof of om}

The leave-one-out unbiased estimator of Fail@(K-1) excluding response $i$ is:
\[
1-\rhohatkm^{\,\loo,i} = \frac{\binom{(N-1)(1-\rhohat^{\,\loo,i})}{K-1}}{\binom{N-1}{K-1}}\,,
\]
where the leave-one-out empirical 0/1 reward is \[\rhohat^{\,\loo,i} = \begin{cases}
    \frac{N^+ - 1}{N-1}  & \text{if } y_i \text{ is correct}\\
    \frac{N^+}{N-1}  & \text{if } y_i \text{ is wrong}
    \end{cases}.
    \]

\noindent For correct responses: $(N-1)(1-\rhohat^{\,\loo,i}) = (N-1) - (N^+ - 1) = N^-$.

\noindent For wrong responses: $(N-1)(1-\rhohat^{\,\loo,i}) = (N-1) - N^+ = N^- - 1$.

Apply now Lemma~\ref{lem:algebra} to yield Eq.~\eqref{eq:om}.

{
\subsection{Proof of Unbiasedness for RLOO$_K$} \label{sec:unbiased_proof}
Fix an example $(x,a)$ and an index $i\in[N]$. Let $y^{\loo_i}:=\{y_j\}_{j\neq i}$ denote the other $N-1$ IID samples,
and define for convenience the leave-one-out Fail@(K$-$1) weight
$W^{\rloo_i}:=1-\rhohatkm^{\loo,i}$ and the RLOO baseline
$b^{\loo_i}:=\frac{1}{N-1}\sum_{j\neq i} r_j$. As mentioned in the main body, the $\rloo_K$ gradient results from replacing in the gradient formula of $\Ghat_{\reinforce_K}$ in Eq. \eqref{eq:reinforce_K simple} the reward $r_i$ with the vanilla RLOO advantage $A_i=r_i-b^{\loo_i}$. Thus, the RLOO$_K$ estimator for sample $i$ is defined as: 
\[
\hat g_i \;:=\; W^{\rloo_i}\,(r_i-b^{\loo_i})\,\nabla_i,
\qquad \nabla_i:=\nabla_\theta \log \pi_\theta(y_i|x).
\]
Since $W^{\rloo_i}$ and $b^{\loo_i}$ are functions of $y^{\loo_i}$ only, they are independent of $(r_i,\nabla_i)$. Hence, conditioning on $y^{\loo_i}$,
\begin{align*}
\E[\hat g_i]
&=\E\!\left[ W^{\rloo_i}\,\E[(r_i-b^{\loo_i})\nabla_i \mid y^{\loo_i}] \right]
= \E\!\left[ W^{\rloo_i}\left(\E[r_i\nabla_i]-b^{\loo_i}\E[\nabla_i]\right)\right] \\
&=\E[W^{\rloo_i}]\cdot \E[r_i\nabla_i]
\;=\; \bigl(1-\rho_{K-1,\theta}(x,a)\bigr)\cdot \nabla_\theta \rho_\theta(x,a),
\end{align*}
where we used $\E[\nabla_i]=0$, $\E[r_i\nabla_i]=\nabla_\theta\rho_\theta(x,a)$, and $\E[W^{\rloo_i}]=1-\rho_{K-1,\theta}(x,a)$
since $W^{\rloo_i}$ is the standard unbiased estimator of Fail@(K$-$1) computed from $N-1$ samples.
Averaging over $i$ and multiplying by $K$ yields
$\E[\Ghat_{\rloo_K}(\theta;(x,a))]=K(1-\rho_\theta)^{K-1}\nabla_\theta\rho_\theta=\nabla_\theta\rho_{K,\theta}$,
proving unbiasedness.
}

\subsection{Proof of  of Claim \ref{claim:bydance_objective}: Reverse-Engineering $\grpot$}
Recall the empirical update for the advantage-shaping method from Eq. \eqref{eq:wenlong vs vanilla}:
\begin{align}
\Ghat_{\grpot} &= \omt \cdot \widehat G_\grpo
= \sqrt{\frac{1-\rhohat_K}{\rhohat_K}}\sqrt{\frac{\rhohat}{1-\rhohat}}\cdot \widehat G_\grpo\,\nn
\\
&=\sqrt{\frac{1-\rhohat_K}{\rhohat_K}}\sqrt{\frac{\rhohat}{1-\rhohat}}\cdot \sqrt{\rhohat(1-\rhohat)}\cdot\left(\nablap
- \nablam\right)\,.
\end{align}
Replacing empirical $\hat{\cdot}$ quantities with their population counterparts, the population-level gradient (call it) $G_{\grpot}$ is:
\[
G_{\grpot} = \left( \sqrt{\frac{1-\rho_K}{\rho_K}} \cdot \sqrt{\frac{\rho}{1-\rho}} \right) \sqrt{\rho(1-\rho)}\cdot \E_{y_i\sim\picond}\left[\nablap
- \nablam\right]\,.
\]

The expectation in the last term becomes
\[
\E_{y\sim\picond}\!\left[\nabla_\theta\log\pi_\theta(y|x)\mid r(y,a)=1\right]
     \;-\; \E_{y\sim\picond}\!\left[\nabla_\theta\log\pi_\theta(y|x)\mid r(y,a)=0\right].
\]
Using Lemma \ref{lem:conditional gradient identity}, this equals $\nabla_\theta\rho/(\rho(1-\rho))$. Thus, 
\begin{align}
     G_{\grpot} = \left( \sqrt{\frac{1-\rho_K}{\rho_K}} \cdot \sqrt{\frac{\rho}{1-\rho}} \right) \cdot \left( \frac{1}{\sqrt{\rho(1-\rho)}} \nabla_\theta \rho \right)  = \left( \sqrt{\frac{1-\rho_K}{\rho_K}} \cdot \frac{1}{1-\rho} \right) \cdot \nabla_\theta \rho\,. \label{eq:proof_bydance_grad_rho}
\end{align}
We now relate $\rho$ to $\rho_K$. Recall that $\rho_K = 1-(1-\rho)^K \implies 1-\rho = (1-\rho_K)^{1/K}$.
From this also $
    \nabla_\theta \rho_K =  K(1-\rho)^{K-1} \nabla_\theta \rho=K(1-\rho_K)^{1-1/K}\nabla_\theta\rho\,.$

Substitute these two relationships back into \eqref{eq:proof_bydance_grad_rho}:
\begin{align}
    G_{\grpot} 
    &= \left( \frac{1}{K} \cdot \sqrt{\frac{1-\rho_K}{\rho_K}} \cdot \frac{1}{(1-\rho_K)^{1/K} (1-\rho_K)^{1-1/K}} \right) \cdot \nabla_\theta \rho_K \nn \\
    &= \left( \frac{1}{K\sqrt{\rho_K(1-\rho_K)}} \right) \cdot \nabla_\theta \rho_K\,. \label{eq:proof_bydance_f_prime}
\end{align}
This is now in the form $G_{\grpot}  = F'(\rho_K) \nabla_\theta \rho_K$. We integrate in $\rho_K$ to find the surrogate $F$:
\begin{align}
     \int \frac{1}{\sqrt{\rho_K(1-\rho_K)}} \,d\rho_K = 2 \arcsin(\sqrt{\rho_K}) + C = \arcsin(2\rho_K-1) + C'\,,
\end{align}
where $C,C'$ are constants. This completes the proof of the claim.

\subsection{Proof of Claim \ref{claim:skewR}: Reverse-engineering Skew-R}

Recall the empirical GRPO update (no clipping) from Eq. \eqref{eq:GRPO}:
\[
\widehat G_\grpo
= \sqrt{\rhohat(1-\rhohat)}\left[\nablap
- \nablam\right].
\]
We have seen in the proof of Claim \ref{claim:bydance_objective} that, at the  population level this update becomes
\[
G_\grpo \;=\; \frac{1}{\sqrt{\rho(1-\rho)}}\,\nabla_\theta \rho,
\]
Then, the skew-R update at the population level is
\[
G_\skewr \;=\; (1-\rho)\,G_\grpo \;=\; \sqrt{\frac{1-\rho}{\rho}}\;\nabla_\theta\rho.
\]
We can more conveniently express this as
\[
G_\skewr = \frac{1}{2\sqrt{\rho(1-\rho)}}\cdot \nabla_\theta\rho + \frac{1-2\rho}{2\sqrt{\rho(1-\rho)}} \cdot \nabla_\theta\rho\,.
\]
We know from Corollary \ref{cor:grpo_surrogate} that the first term corresponds to a surrogate reward $\arcsin(\sqrt{\rho})$. Analogously, the second term corresponds to $\Omega(\rho)$ for $\Omega$ such that 
 $\Omega'(\rho)=\frac{1-2\rho}{2\sqrt{\rho(1-\rho)}}$. Integrating gives (up to a constant) $\Omega(\rho)=\sqrt{\rho(1-\rho)}$ completing the proof of the claim.

\subsection{Reward-Level Entropy Regularization}\label{sec:entropy app}
We start by computing the population gradient of the per example entropy-surrogate reward
$
2 \arcsin(\sqrt{\rho}) + \lambda \cdot H(\rho)\,,$
where recall $H(\rho) = -\rho\log(\rho) - (1-\rho)\log(1-\rho)$. By direct differentiation
\[
G_\ent(\theta;(x,a)) = \left[\frac{1}{\sqrt{\rho(1-\rho)}}+\la \cdot\log\Big(\frac{1-\rho}{\rho}\Big)\right]\cdot \nabla_\theta\rho
\]
Following the recipe of transforming the population objective into an empirical estimate we (1) substitute population quantities with their empirical counterparts in the multiplicative factor of the gradient, (2) substitute the population gradient with the an RLOO empirical gradient. We then get
\[
\Ghat_\ent(\theta;(x,a)) = \left[\frac{1}{\sqrt{\rhohat(1-\rhohat)}}+\la \cdot\log\Big(\frac{1-\rhohat}{\rhohat}\Big)\right]\cdot\rhohat(1-\rhohat)\left[\nablap-\nablam\right]\,.
\]
Recalling that $\Ghat_\grpo=\sqrt{\rhohat(1-\rhohat)}\left[\nablap-\nablam\right]$, we find 
\[
\Ghat_\ent(\theta;(x,a)) = \left[1+\la \cdot\log\Big(\frac{1-\rhohat}{\rhohat}\Big)\cdot\sqrt{\rhohat(1-\rhohat)}\right]\cdot\Ghat_\grpo\,.
\]
Equivalently, the advantage scores of the new method are
\[
A_\ent^\pm = \left[1+\la \cdot\log\Big(\frac{1-\rhohat}{\rhohat}\Big)\cdot\sqrt{\rhohat(1-\rhohat)}\right]\cdot A^\pm\,.
\]

\section{Additional Analysis and Discussion of GRPO$_K$}

\subsection{Reverse-Engineering GRPO$_K$}\label{sec:rev grpok}

Taking limit of $N\rightarrow\infty$ and replacing empirical quantities with population counterparts in Eq. \eqref{eq:grpo pass@K}, we find
\[
G_{\grpo_K} = \frac{1-\rho_K}{1-\rho}\sqrt{\rho(1-\rho)}\cdot \Delta
\]
where the term $\Delta=\nabla_\theta\rho/(\rho(1-\rho))$ is the difference of correct/wrong population gradients defined in Lemma \ref{lem:conditional gradient identity}. Simplifying, 
\[
G_{\grpo_K} = \frac{(1-\rho)^{K-1}}{\sqrt{\rho(1-\rho)}}\cdot\nabla_\theta \rho\,.
\]
Integrating the multiplicative factor $(1-\rho)^{K-\frac{3}{2}}\rho^{-\frac{1}{2}}$ over $\rho$ the per-example surrogate reward that gives this gradient is the incomplete beta function with parameters $1/2$ and $K-1/2$:
\[
\bi{\rho}{1/2}{K-1/2}\,.
\]
Written in terms of Pass@K reward, the surrogate reward is
\begin{align}\label{eq:surrogate of grpok}
\bi{1-(1-\rho_K)^{1/K}}{1/2}{K-1/2}\,.
\end{align}
By symmetry properties of the incomplete beta function, this is up to (a $K$-dependent) constant equivalent to $
-\bi{(1-\rho_K)^{1/K}}{1/2}{K-1/2}\,.$

For $K=1$,
\[
\bi{1-(1-\rho_K)^{1/K}}{1/2}{K-1/2} = \bi{\rho_K}{1/2}{1/2}=\int_{0}^{\rho_K}\frac{1}{\sqrt{\tau(1-\tau)}}\mathrm{d}\tau=2\arcsin(\sqrt{\rho_K}).
\]
Thus, the surrogate function coincides with that of $\grpot$  (Claim \ref{claim:forward_grpot}). However, for $K\geq 2$, the two surrogates differ. Specifically, GRPO$_K$'s surrogate is adaptive to the value of $K$. See Fig. \ref{fig:surrogate-rewards-appendix}.

\begin{figure}[t]
  \centering
  \includegraphics[width=0.7\textwidth]{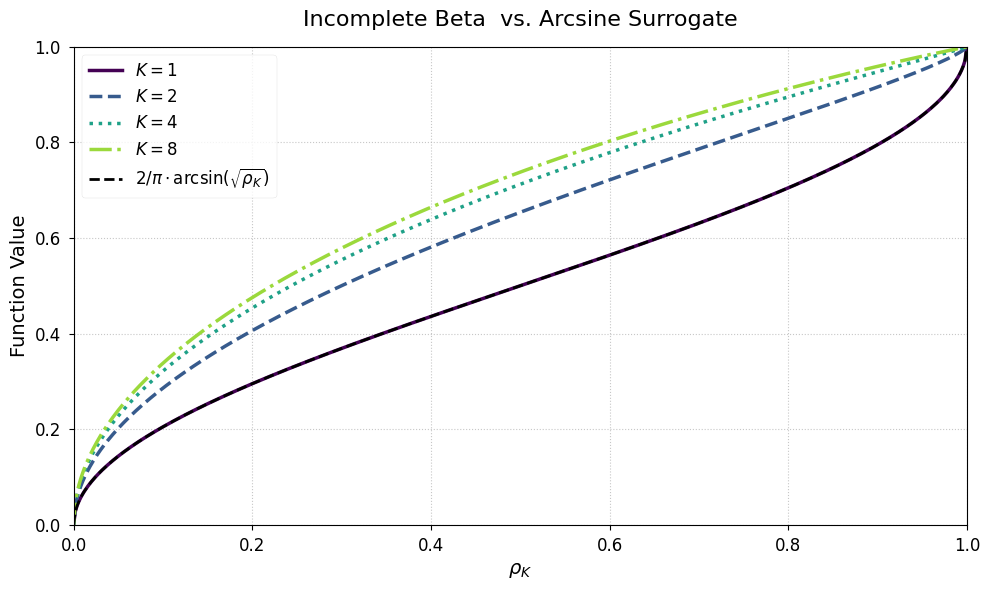}  
  \captionsetup{width=\textwidth} 
  \caption{A comparison of surrogate reward functions, plotted against the Pass@K reward $\rho_K$. 
  The colored lines show the GRPO$_K$ surrogate, Eq. \eqref{eq:surrogate of grpok} for various $K$ values.
  The black dashed line shows the (normalized) $\grpot$ surrogate, $\frac{2}{\pi} \arcsin(\sqrt{\rho_K})$. (All curves are normalized to reach value $1$ at $\rho_K=1$.)
The surrogates agree for $K=1$ (dashed and purple solid lines coincide), but for $K>1$ the GRPO$_K$ surrogates become concave providing stronger optimization incentive for hard prompts.
  }
  \label{fig:surrogate-rewards-appendix}
\end{figure}

\subsection{GRPO$_{K=2}$ vs. Skew-R}\label{sec:grpok_vs_skewr}
For $K=2$: $\omkp=(1-\rhohat_2-1/N)/(1-\rhohat)$ and $\omkm=(1-\rhohat_2)/(1-\rhohat)$, which applied to Eq.~\eqref{eq:grpo pass@K} gives
\[
\Ghat_{\grpo_{K=2}} = \frac{1-\rhohat_2-1/N}{1-\rhohat}\sqrt{\rhohat(1-\rhohat)}\cdot\nablap - 
\frac{1-\rhohat_2}{1-\rhohat}\sqrt{\rhohat(1-\rhohat)}\cdot\nablam\,.
\]
For $N\gg1$, for which ${1-\rhohat_2-1/N}\approx1-\rhohat_2\approx (1-\rhohat)^2$, this becomes the same as the skew-R objective in Eq.~\eqref{eq:R skewr}: $(1-\rhohat)\sqrt{\rhohat(1-\rhohat)}\left[\nablap-\nablam\right]$. This raises a question with regards to our reverse-engineering perspective: On the one hand, we have argued that at population level skew-R maximizes $\arcsin(\sqrt{\rho})+\sqrt{\rho(1-\rho)}$. On the other hand, we have seen in the previous subsection that GRPO$_{K=2}$ maximizes $\bi{1-(1-\rho_2)^{1/2}}{1/2}{3/2}$. 

First, note that the two surrogates are in fact algebraically identical. To see this, we can express $\bi{1-(1-\rho_2)^{1/2}}{1/2}{3/2}$ in terms of $\rho$ (using the definition $\rho_2=1-(1-\rho)^2$):
\[
\bi{\rho}{1/2}{3/2} = \int_{0}^\rho\sqrt\frac{1-\tau}{\tau}\mathrm{d}\tau=\int_{0}^\rho\frac{1}{2\sqrt{\tau(1-\tau})}\mathrm{d}\tau+\int_{0}^\rho\frac{1-2\tau}{2\sqrt{\tau(1-\tau})}\mathrm{d}\tau\,.
\]
The first integral is $\arcsin(\sqrt{\rho})$ and the second is $\sqrt{\rho(1-\rho)}$, confirming the identity. This means there is no inconsistency in our reverse-engineering strategy. At the same time, the simple three-step forward-engineering recipe operates by substituting population quantities with empirical estimates without accounting for unbiasedness (e.g., via leave-one-out): thus it cannot recover the finite-sample ($N$ not $\gg1$) version of GRPO$_K$ and arrives at skew-R. 

This algebraic identity, however, raises a subtle question of interpretation for how we talk about ``regularization.'' One reading (a \emph{0/1-reward-centric} lens) is to say: $\mathrm{GRPO}_{K=1}$ already optimizes the surrogate $\arcsin(\sqrt{\rho})$; moving to $K=2$ simply adds the extra term $\sqrt{\rho(1-\rho)}$, which one might then call a reward-level regularizer

However, for the Pass@K methods, we argue it is more appropriate to adopt a \emph{Pass@K-centric} lens. For Pass@K methods, $\rho_K$ is the actual performance metric of interest, and the 0/1 reward ($\rho$) is merely an intermediate quantity used in its calculation (since $\rho_K = 1-(1-\rho)^K$). From this perspective, the objective in Eq.~\eqref{eq:surrogate of grpok} is not a 'regularized 0/1 surrogate' but rather a \emph{direct surrogate for $\rho_K$}: a monotone transformation of the Pass@K objective itself. Under this view, $\grpo_K$ is just doing standard surrogate maximization of the performance metric (analogous to cross-entropy as a surrogate for 0/1 risk), rather than ``optimizing $\rho$ with an added regularizer.''

Both interpretations are mathematically consistent; the choice depends on whether one views the optimization goal as improving the 0/1 reward (and thus interpreting $K>1$ as regularization) or as directly optimizing the Pass@K metric (and thus interpreting the objective as a simple, monotone transformation of $\rho_K$).

\subsection{Comparison to Biased Scaling} \label{sec:biased_app}

In Sec. \ref{sec:Pass@K} we applied the log-derivative trick directly to the Pass@K objective, yielding the expected gradient $G_{\pk}(\theta;(x,a))$ in Eq. \eqref{eq:G Pass@K general}. We then derived Pass@K variants of REINFORCE by using a leave-one-out construction to obtain unbiased estimates of the two components in \eqref{eq:G Pass@K general}: $\mathrm{Fail}@{(K-1)}$ and $G_{0/1}(\theta;(x,a))$, resulting in the unbiased empirical gradient $\widehat G_{\rloo_K}$ in Eq. \eqref{eq:rloo pass@K}.

In contrast, \citet{sadegh} apply a \emph{biased} empirical gradient that estimates the $\mathrm{Fail}@{(K-1)}$ factor as $(1-\widehat\rho)^{K-1}$. The corresponding advantage-score scaling is (up to constant multiplicative factor $K$)
\begin{align}\label{eq:biased}
(1-\widehat\rho)^{K-1}\cdot A^\pm\,.
\end{align}
Unlike the estimators in Sec. \ref{sec:Pass@K}, this remains biased even if the advantage-scores $A^\pm$ correspond to an unbiased method (e.g., RLOO), because $(1-\widehat\rho)^{K-1}$ is itself a biased estimator of $\mathrm{Fail}@{(K-1)}$. Replacing it by a leave-one-out version $(1-\widehat\rho^{\mathrm{loo}})^{K-1}$ does not remove this bias. 

Note that such biased scaling is also present in advantage-shapes resulting from our forward-engineering of surrogate rewards in Sec. \ref{sec:uni}. Even our reverse-engineering operates at the population level. Thus, here, it would tell us that both GRPO$_K$ (which applies unbiased reweightings to vanilla GRPO) and the method of \citet{sadegh}, both implicitly maximize the same surrogate reward (as derived in App. \ref{sec:rev grpok}). In the practical regime of few samples though the two methods could lead to different behaviors. While unbiasedness is intuitively a good property, we discuss below regimes where the biased variant can be particularly attractive.  It is worth noting that for $K=2$ the biased variant coincides with skew-R.

\begin{figure}[t] 
    \centering
    \includegraphics[width=\textwidth]{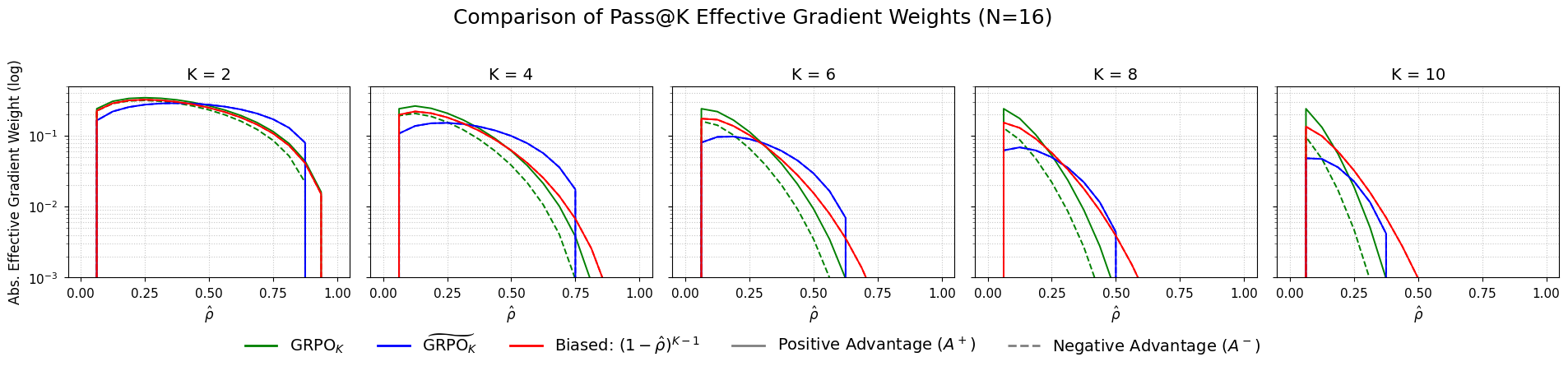}
    \captionsetup{width=\textwidth}
\caption{%
Comparison of \emph{absolute} effective gradient weights (log scale) for Pass@K training with $N=16$.
Each panel shows a different $K$ and plots the magnitude applied to correct (solid) and wrong (dashed) responses under three schemes:
GRPO$_K$ (Eq. \eqref{eq:sadegh vs vanilla}; green), $\grpot$ (Eq. \eqref{eq:wenlong vs vanilla}; blue), and the {biased scaler}  (Eq. \eqref{eq:biased}; red).
For small $\hat{\rho}$, GRPO$_K$ assigns slightly larger weights than the biased scaler. As $\hat{\rho}$ increases, GRPO$_K$ enforces a hard zero once  $\hat{\rho}>1-\frac{K-1}{N}$, whereas the biased scaler decays smoothly. $\grpot$ applies a symmetric, example-level scale and often yields the largest mid-$\hat{\rho}$ weights until $\hat{\rho}_K$ saturates.
Only the GRPO$_K$ exhibits asymmetric scaling between correct and wrong responses.%
}
\label{fig:biased vs not}
\end{figure}

\paragraph{When a biased reweighting factor can help.}
There are practical regimes where biased scalings are intuitively attractive. Specifically, for Pass@K optimization this includes (see also Fig. \ref{fig:biased vs not}): \emph{(1) Compute-constrained $N\!\le\!K$.} The unbiased Pass@K estimator in Sec.~\ref{sec:Pass@K} (and the grouping-based method of Sec.~\ref{sec:bydance}) is undefined when $K>N$, whereas $(1-\widehat\rho)^{K-1}$ still yields a meaningful signal.  \emph{(2) Small $N$ relative to $K$.} When $N>K$ but not by much, the unbiased combinatorial weight becomes exactly zero whenever $N^-<K-1$, prematurely killing gradients; the smooth scaler $(1-\widehat\rho)^{K-1}$ avoids this early saturation and provides a gentler shaping $A_{\pk}^\pm=(1-\widehat\rho)^{K-1}A^\pm$ that maintains nonzero updates. \emph{(3) Very large $N$ relative to $K$.} When $N\gg K$, $\omkp\approx\omkm\approx\bigl(N^-/(N-1)\bigr)^{K-1}\approx(1-\widehat\rho)^{K-1}$, so the two estimators nearly coincide. 

\subsection{Comparison to Further Related Work}\label{sec:KW}
In the main body, we compared $\grpo_K$ to its biased variant from \citet{sadegh} and to $\grpot$ as implemented by \citet{passk_bydance}. Policy optimization for Pass@K has also been studied by \citet{passk_unbiased_gradient}. Their approach differs from both ours and \citet{sadegh} in two important ways.

First, \citet{passk_unbiased_gradient} do not formulate GRPO-style estimators. The fair comparison is therefore with our $\mathrm{REINFORCE}_K$ / $\mathrm{RLOO}_K$ estimators from Sec.~\ref{sec:Pass@K}, which directly target the Pass@K objective. Like us, they construct an unbiased estimator of $\nabla_\theta \rho_{K,\theta}(x,a)$, but the structure of their estimator is different.

In our notation (and dropping an overall prefactor $K$ that is common across examples and matches Sec.~\ref{sec:Pass@K}), their per-example estimator for $\nabla_\theta \rho_{K,\theta}(x,a)$ has the following form:
\begin{align} \label{eq:Pass@k gradient KW}
\frac{1}{N}\sum_{i\in[N]}
\nabla_\theta \log \pi_\theta(y_i|x)
\cdot
\begin{cases}
1, & \text{if } r_{0/1}(y_i,a)=1, \\
1 - {\binom{N^- - 1}{K-1}}\big/{\binom{N-1}{K-1}}, & \text{if } r_{0/1}(y_i,a)=0.
\end{cases}
\end{align}
This should be contrasted with our $\mathrm{REINFORCE}_K$ estimator which can be interpreted as a reweighting of vanilla REINFORCE. By contrast, the estimator in Eq.~\eqref{eq:Pass@k gradient KW} assigns a \emph{positive} coefficient not only to correct responses but also to \emph{incorrect} responses.  That is, it explicitly takes \emph{positive} gradient steps in directions corresponding to some incorrect samples in the batch. Thus, it is also not comparable to RLOO$_K$. 

Besides, it is not immediately clear from \eqref{eq:Pass@k gradient KW} how to subtract a baseline to reduce variance while maintaining unbiasedness. Instead, going from REINFORCE$_K$ to RLOO$_K$ in Sec. \ref{sec:Pass@K} was rather intuitive. We refer the reader to  \citep[Sec. 4]{passk_unbiased_gradient} for more discussion on variance reduction of their method.

For completeness, we can apply our reverse-engineering lens to Eq.~\eqref{eq:Pass@k gradient KW} and verify that it indeed ascends the Pass@K objective in expectation. Rewriting Eq.~\eqref{eq:Pass@k gradient KW} more compactly using our notation for conditional gradients $\widehat{\nabla}_+$ and $\widehat{\nabla}_-$ and for $\omkm$, we obtain (for a single $(x,a)$)
\[
\hat\rho \,\widehat{\nabla}_+
\;+\;
(1-\hat\rho)\,\bigl(1-\omkm\bigr)\,\widehat{\nabla}_-\,.
\]
 Passing to population quantities (Lemmas \ref{lem:algebra} and  \ref{lem:conditional gradient identity}) and using $\rho_K = 1-(1-\rho)^K$, this expression becomes
\[
\rho \cdot \frac{\nabla_\theta \rho}{\rho}
\;-\;
(1-\rho)
\left(
1 - \frac{1-\rho_K}{1-\rho}
\right)
\cdot
\frac{\nabla_\theta \rho}{1-\rho}
\;=\;
\left(
\frac{1-\rho_K}{1-\rho}
\right)
\nabla_\theta \rho
\;=\;
(1-\rho)^{K-1}\,\nabla_\theta \rho
\;=\;
\frac{1}{K}\,\nabla_\theta \rho_{K,\theta}.
\]
Thus, in population, the estimator from \citet{passk_unbiased_gradient} is indeed maximizing $\rhoktheta$, consistent with our own derivations. The difference is at finite-samples: (i) how gradient weight is assigned across samples (especially incorrect ones), and (ii) how variance reduction is handled at finite $N$.

Finally, \citet{passk_unbiased_gradient} explicitly emphasize extending Pass@K-style optimization to non-binary continuous rewards. It would be interesting future work to extend the surrogate reward lens to such reward settings beyond binary.

{\section{Empirical validations}\label{sec:exp}
Here we provide \emph{preliminary} empirical validation of several policy-gradient methods discussed in this paper on controlled mathematical reasoning tasks.  The goal of these experiments is not a comprehensive benchmark, but rather to demonstrate that the derived/identified updates can be implemented and can be competitive with the strong GRPO baseline in a controlled setting. As discussed in Sec.~\ref{sec:practical}, a systematic empirical study across tasks, compute regimes, and hyperparameter choices is beyond the scope of this paper and is left to future work.}

{
\subsection{Synthetic Data}\label{sec:synthetic}
\noindent\textbf{Setting.}
We perform a lightweight round of supervised fine-tuning (SFT) on Qwen2.5-0.5B-Base~\citet{qwen25} using 1{,}000 randomly sampled problems from OpenMathInstruct-2~\citep{toshniwal2024openmathinstruct}. 
This SFTed model serves as the baseline and primarily learns proper formatting and instruction-following behavior for mathematical problems. 
}

{We then apply RL to a synthetic arithmetic task of the following form:
\texttt{What is $a \times b + c \times d$?}
The training, validation, and test splits consist of 50{,}000, 500, and 500 problems, respectively. For training, we use a learning rate of $10^{-6}$, weight decay $0.01$, sampling temperature $1$, 16 unique prompts per optimization step, and $N=8$ rollouts per prompt (i.e., batch size of 128 sequences per optimization step). The training is done on-policy for eight epochs, and without KL regularization term.
}

{To avoid ambiguity between the Pass@K metric used for evaluation and the targeted Pass@K objective used during training, we denote $\hat{K}$ as the target value of $K$ used for training, and $K$ as the value used for evaluation. {For evaluation we use a test-time rollout size of $M=128$.}}

{We evaluate the following algorithms: 
%Pass@K-mixed of \citet{passk_bydance} (Eq.~\eqref{eq:combo bydance}),
GRPO$_K$ (Eq. \eqref{eq:grpo pass@K}), the biased GRPO$_K$ implementation of \citet{sadegh} (Eq. \eqref{eq:biased}), $\grpot$ of \citet{passk_bydance} (Eq. \eqref{eq:grpot}, our skew-R variant (Eq.~\eqref{eq:R skewr}), and the entropy-regularized method (Eq.~\eqref{eq:entropy}). For unbiased and biased GRPO$_K$ and $\grpot$, we set $K\leftarrow\hat K$ (with $\hat K \in \{2,3\}$) in Eqs. \eqref{eq:grpo pass@K}, \eqref{eq:biased}, and \eqref{eq:grpot} respectively. For the entropy method, we set $\la=1$.}

\begin{figure}
    \centering
    \includegraphics[width=\linewidth]{./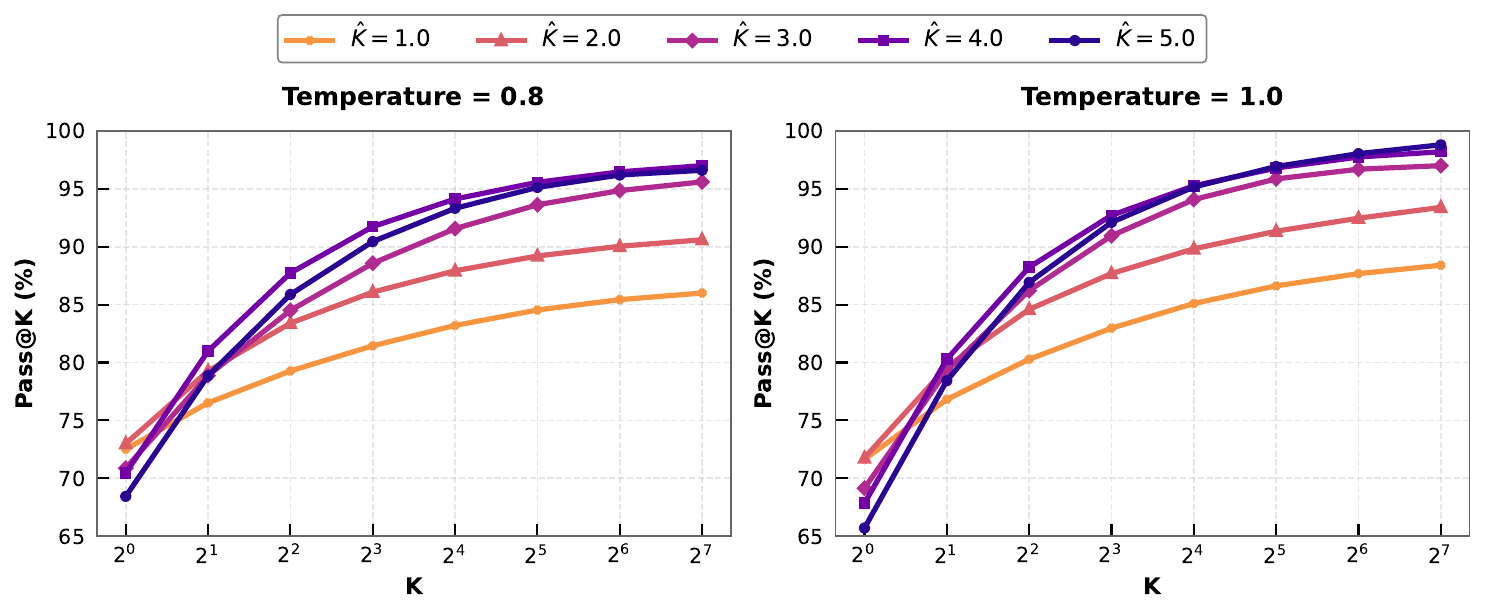}
    \captionsetup{width=\textwidth}
    \caption{{Performance of (biased) $\text{GRPO}_{\hat{K}}$ (Eq. \eqref{eq:biased} with $K\leftarrow\hat K$).  $\hat{K}=1$ corresponds to vanilla GRPO, while $\hat{K}=2$ is equivalent to skew-R. Larger values of $\hat{K}$ explicitly target higher Pass@K. Empirically, increasing $\hat{K}$ improves Pass@K at larger evaluation values of $K$, but degrades performance at smaller $K$. Gains diminish as $\hat{K}$ increases.}}
    \label{fig:qwen-k-ablation}
\end{figure}

{\noindent\textbf{Higher values of $\hat{K}$ lead to higher Pass@K.}
Figure~\ref{fig:qwen-k-ablation} shows the effect of varying the target $\hat{K}$ during training on downstream Pass@K performance.
We observe that increasing $\hat{K}$ generally improves Pass@K at larger evaluation values of $K$.
However, this improvement comes at the cost of reduced performance for smaller $K$, suggesting a trade-off between single-sample accuracy and multi-sample coverage.
Moreover, the gains diminish as $\hat{K}$ increases, indicating diminishing returns for very large target values.}

{\noindent\textbf{Algorithm comparisons.}
Figure~\ref{fig:biased-unbiased-entropy} compares the performance of the algorithms across values of $K$ used for evaluation and for sampling temperatures $0.8$ and $1$. {Recall that for $\hat K=2$, the biased $\grpo_K$ variant is identical to skew-R.}}

{For $\hat{K}=2$ (Figure~\ref{fig:biased-unbiased-entropy-2}), the unbiased estimator outperforms the biased estimator (which in this case coincides with skew-R) across most evaluation values of $K$, suggesting that reducing finite-$N$ bias can be beneficial when optimizing for relatively small target $K$. {Recall from Sec.~\ref{sec:grpok_vs_skewr} that for $\hat K=2$ skew-R and the biased estimator correspond to the same \emph{population} surrogate objective, which does not capture the finite-$N$ effects reflected in this experiment.}}

{In contrast, for $\hat{K}=3$ (Fig.~\ref{fig:biased-unbiased-entropy-3}), the biased estimator achieves better performance at larger evaluation values of $K$, indicating that the finite-sample trade-off can shift with $\hat{K}$. This may be due to the higher variance of the unbiased estimator in this regime, or because the unbiased estimator too aggressively dampens the advantage scores for high-$\hat\rho$ prompts, as seen in Fig.~\ref{fig:passk_advantage_comparison}.}

{Across both settings: the entropy-based method achieves stronger Pass@1, indicating improved single-sample accuracy. However, as expected, at higher Pass@K values, it underperforms both the biased and unbiased $\grpo_K$ estimators which aim to directly maximize a surrogate of the Pass@K reward $\rhoktheta$. Furthermore, $\grpot$ performs on par with the biased $\grpo_K$ estimator.}

\begin{figure}[htbp]
    \centering
    \begin{subfigure}{\linewidth}
        \centering
        \includegraphics[width=\linewidth]{./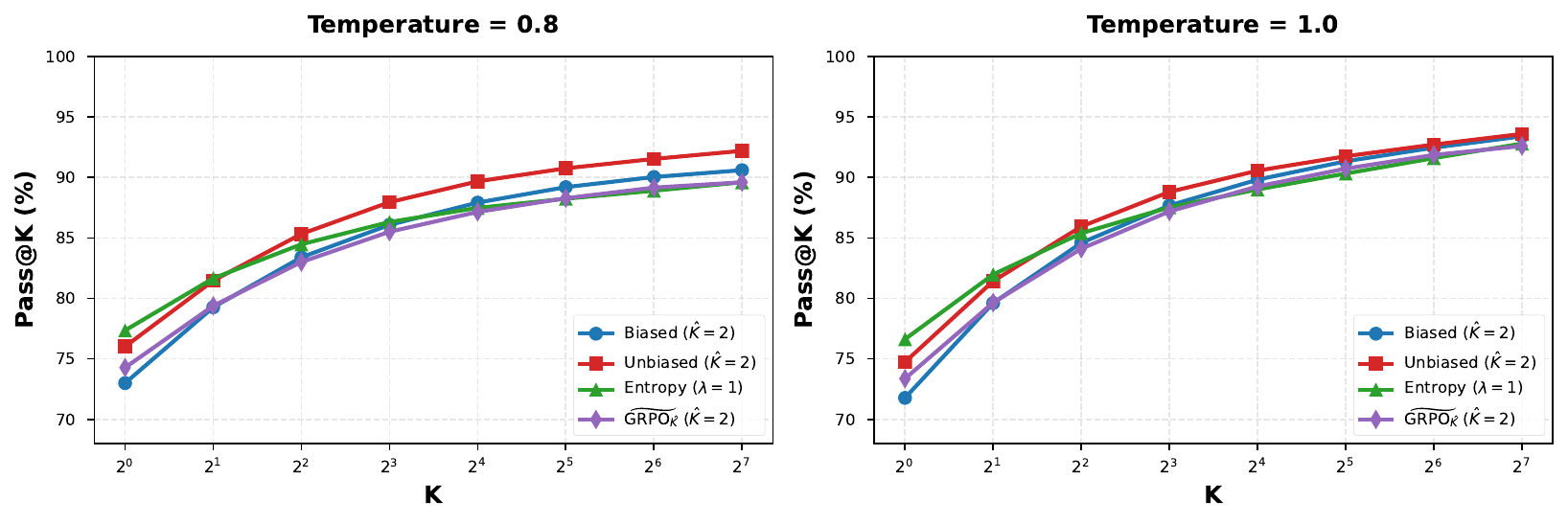}
        \caption{}
        \label{fig:biased-unbiased-entropy-2}
    \end{subfigure}
    \vspace{0.5cm}
    \begin{subfigure}{\linewidth}
        \centering
        \includegraphics[width=\linewidth]{./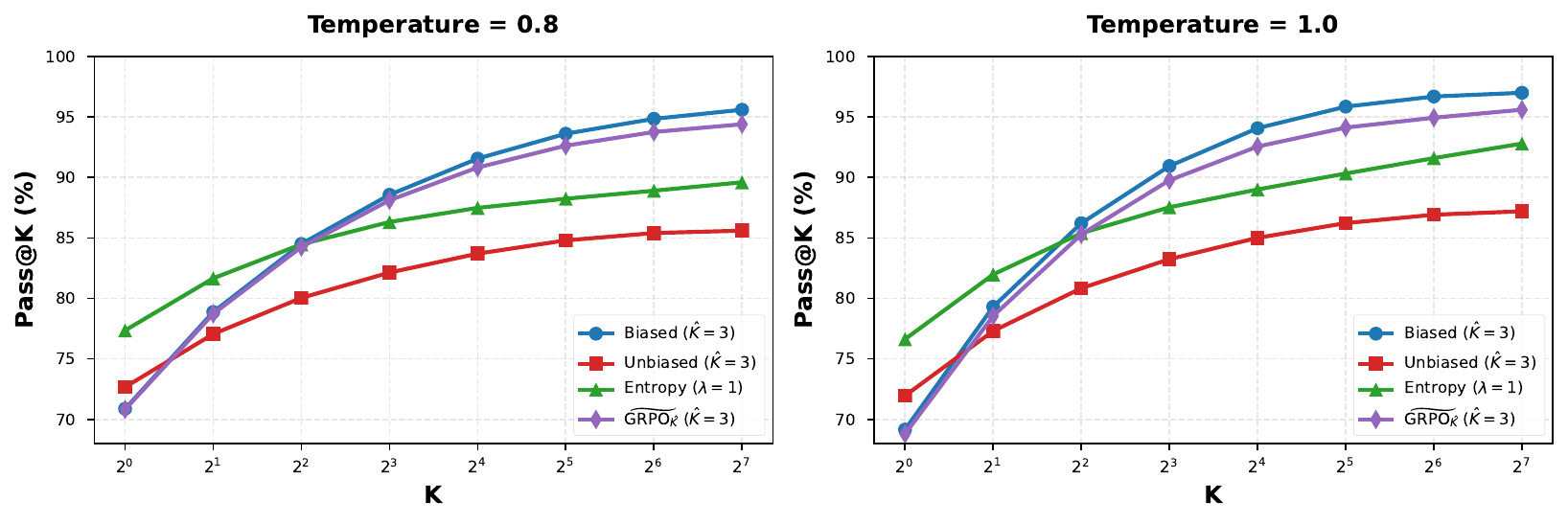}
        \caption{}
        \label{fig:biased-unbiased-entropy-3}
    \end{subfigure}
    \captionsetup{width=\textwidth}
    \caption{{Performance comparison of different methods on the synthetic dataset of Sec. \ref{sec:synthetic}, using $N=8$ responses per prompt and test-time sampling temperature $0.8$ (Left) and $1$ (Right). (a) $\hat{K}=2$: The unbiased $\grpo_{\hat K}$ is best performing  across $K\geq 2$. The entropy-based method achieves stronger 0/1 reward (i.e., Pass@1) (b) $\hat{K}=3$: The biased $\grpo_{\hat K}$ is best performing for $K\geq 4$. $\grpot$'s performance is comparable. The entropy-based method yields again the strongest Pass@1 performance.}}
    \label{fig:biased-unbiased-entropy}
\end{figure}

%%%%%%%%%%%%%%%%%%%%%%%%%%%%%%%%%%%%%%%%%%%%%%%%%%%%%%%%%%%%%%%%%%%%%%%%%%%%%%%%%%%%%%%%%%%%%%%%%%%%%%%%%%%%%%%%%%%%%%%%%%%%%%%%%

{
\subsection{Real math problems}
Here, we conduct additional initial evaluations on real-world mathematical reasoning tasks. Specifically, we compare three methods: GRPO, Pass@K-mixed (Eq. \eqref{eq:combo bydance}, $\hat K=4$), and Skew-R (Eq. \eqref{eq:R skewr}). Following the training protocol of \citet{thr}, we fine-tune Qwen2.5-Math-1.5B \citep{qwen25} on the MATH dataset (levels 3–5) \citep{hendrycksmath2021}. To accelerate training, we adopt dynamic sampling \citep{yu2025dapo}, which discards samples with zero advantage and resamples until a full batch is formed. All methods share identical reinforcement-learning hyperparameters. Concretely, training is performed on four A100 GPUs with a prompt batch size of 256 and $N=8$ rollouts per prompt. We use a learning rate of $1\times10^{-6}$ and a mini-batch size of 64, yielding 32 gradient updates per step. Training runs for 40 steps, corresponding to more than two effective epochs due to the increased throughput from dynamic sampling. We set the sampling temperature to 1.0, the clipping ratio to 0.2, and the KL regularization coefficient to $1\times10^{-4}$. We evaluate on the challenging AIME2024, AIME2025, and AMC23 benchmarks using unbiased Pass$@K$ accuracy with temperature 1.0, which better reflects exploration settings. Final performance is reported using the unbiased Pass@K metric with a test-time rollout size of $M=256$.}

{Table~\ref{tab:skew} shows that Skew-R achieves strong average performance at $K=128$ and $K=256$, matching or surpassing pass@K-mixed. Since large-$K$ primarily reflects the ability to discover diverse correct solutions rather than improving single-sample accuracy, these gains suggest that Skew-R enhances exploration. This observation supports our interpretation that reallocating gradient emphasis toward low-success prompts leads to broader solution coverage. We expect these effects to be amplified when the rollout-size $N$ is larger; such ablations are out of scope and we leave them to future work.
}

\begin{table}[t]\centering\small\setlength{\tabcolsep}{3pt}\begin{tabular}{lccccccccc}\toprule\textbf{Method} & \multicolumn{9}{c}{\textbf{Qwen2.5-Math-1.5B Pass@K}} \\\cmidrule(lr){2-10}& 1 & 2 & 4 & 8 & 16 & 32 & 64 & 128 & 256 \\\midrule\multicolumn{10}{l}{\textbf{AIME 2025}} \\GRPO & 5.9 & 9.9 & 15.0 & 20.5 & 26.5 & 33.6 & 41.5 & 49.8 & 56.7 \\Pass@K-mixed & 5.6 & 9.6 & 14.6 & 20.1 & 26.1 & 33.3 & 41.7 & 50.0 & 56.7 \\skew-R & 5.7 & 9.7 & 14.9 & 20.4 & 25.9 & 31.8 & 37.8 & 44.8 & 53.3 \\\midrule\multicolumn{10}{l}{\textbf{AIME 2024}} \\GRPO & 11.4 & 17.7 & 24.3 & 30.5 & 36.7 & 43.4 & 50.0 & 56.0 & 63.3 \\Pass@K-mixed & 10.6 & 16.7 & 23.5 & 30.3 & 37.1 & 44.3 & 51.2 & 57.5 & 63.3 \\skew-R & 10.2 & 16.0 & 22.3 & 28.3 & 34.7 & 42.3 & 50.8 & 60.3 & 70.0 \\\midrule\multicolumn{10}{l}{\textbf{AMC23}} \\GRPO & 46.6 & 59.1 & 70.0 & 78.9 & 85.5 & 90.2 & 93.7 & 96.0 & 97.5 \\Pass@K-mixed & 45.2 & 58.1 & 69.4 & 78.4 & 85.3 & 90.2 & 95.2 & 98.5 & 100.0 \\skew-R & 45.7 & 58.3 & 69.0 & 77.6 & 84.5 & 90.0 & 94.8 & 98.6 & 100.0\\\midrule\multicolumn{10}{l}{\textbf{Average}} \\GRPO & \textbf{21.3} & \textbf{28.9} & \textbf{36.4} & \textbf{43.3} & \textbf{49.6} & \underline{55.7} & \underline{61.7} & 67.3 & 72.5 \\Pass@K-mixed & \underline{20.5} & \underline{28.1} & \underline{35.8} & \underline{42.9} & \underline{49.5} & \textbf{56.1} & \textbf{62.7} & \textbf{68.7} & \underline{73.3} \\skew-R & 20.5 & 28.0 & 35.4 & 42.1 & 48.4 & 54.7 & 61.1 & \underline{67.9} & \textbf{74.4} \\\bottomrule\end{tabular}
    \captionsetup{width=\textwidth}
\caption{{Performance comparison of different methods (see text) on AIME 2025, AIME 2024, and AMC23 under Qwen2.5-Math-1.5B.}}\label{tab:skew}\end{table}

\end{document}